%% file: preprint.tex
\documentclass{article}

 \usepackage[preprint]{neurips_2026}

\usepackage[utf8]{inputenc} 
\usepackage[T1]{fontenc}    
\usepackage{hyperref}       
\usepackage{url}            
\usepackage{booktabs}       
\usepackage{amsfonts, amsthm}       
\usepackage{nicefrac}       
\usepackage{microtype}    
\usepackage{multirow}
\usepackage{xcolor}         
\usepackage{titletoc}       
\usepackage{graphicx}
\usepackage{amssymb}
\usepackage{algorithm}
\usepackage{algpseudocode}
\usepackage{mathtools}
\usepackage{wrapfig}
\usepackage{caption}
\usepackage{subcaption}
\usepackage{enumitem}

\usepackage{thmtools, mathtools}

\newtheorem{definition}{Definition}

\title{MoRe: Modular Representations for Principled Continual Representation Learning on Sequential Data}

\author{%
    Jiaqi Sun$^{1,2}$ \quad
    Boyang Sun$^{2}$ \quad
    Rasmy M. H.$^{2}$ \quad
    Xiangchen Song$^{1}$ \quad
    Kun Zhang$^{1,2}$ \\
    $^1$Carnegie Mellon University \\
    $^2$Mohamed bin Zayed University of Artificial Intelligence\\
    \texttt{\{jiaqisun, kunz1\}@cmu.edu} \\
}

\begin{document}

\maketitle

\input{tex/0_abstract}
\input{tex/1_intro}
\input{tex/3_problem_setting}

\input{tex/4_theory}
\input{tex/5_method}
\input{tex/6_exp}
\input{tex/7_conclusions}

\newpage
\bibliographystyle{abbrv}
\bibliography{main}

\newpage
\appendix
\input{tex/8_appendix}


\end{document}

%% file: tex/0_abstract.tex
\begin{abstract}
Continual learning requires models to adapt to new data while preserving previously acquired knowledge. At its core, this challenge can be viewed as principled one-step adaptation: incorporating new information with minimal interference to existing representations.
Most existing approaches address this challenge by modifying model parameters or architectures in a supervised, task-specific manner.
However, the underlying issue is representational: tasks require distinct yet structured representations that can be selectively updated without disrupting representations, while structure should reflect intrinsic organization in the data rather than task boundaries.
In sequential data, time-delayed dependencies provide a natural signal for uncovering this organization, revealing how fundamental representations give rise to more specific ones.
Inspired by the modular organization of the human brain, we propose MoRe, a framework that identifies modularity in the representation itself rather than allocating it at the architectural level. MoRe decomposes knowledge into a hierarchy of fundamental and specific modules with identifiability guarantees, enabling principled module reuse, alignment, and expansion during adaptation while preserving old modules by construction.
Experiments on synthetic benchmarks and real-world LLM activations demonstrate interpretable hierarchical structure, improved plasticity-stability trade-offs, suggesting MoRe as a principled foundation for continual adaptation.
\end{abstract}

%% file: tex/1_intro.tex
\section{Introduction}
With the growing attention to foundation models, training such models in a sequential manner (driven by limited learning capacity and the inherently sequential flow of data) has brought the long-standing problem of Continual Learning (CL), or lifelong learning, back to the forefront.
Generally speaking, the research problem of CL is how to avoid forgetting (or catastrophic forgetting) while acquiring new \textit{abilities} during the learning procedure. In essence, without assuming that some part of the model remains stable, avoiding catastrophic forgetting is impossible.

Existing CL methods can be divided into supervised and unsupervised CL. Supervised CL directly models the \textit{abilities} as tasks and aims to let the model conduct incremental tasks upon seeing more and more tasks, treating the data distribution as secondary. Representative methods constrain parameter updates, replay information from previous tasks, or isolate parameters across tasks; the architectural variants of the last category recognize that different tasks require different parts of the model, but how those parts are related is rarely addressed.
Unsupervised CL, on the other hand, focuses on representation learning, yet most of its methods still operate via replay, distillation, regularization, or constraint modification. Across both settings, decisions about what to preserve and what to update are largely heuristic: there is no principled criterion, grounded in the structure of the data itself, for determining which parts of a representation should be reused and which should be replaced.

From this lens, the core of CL is representational: the specialization of model components determines what can be reused and what must be updated. We therefore tackle the problem of \emph{continual representation learning}: how can a model form representations that grow cumulatively, across sequentially observed scenarios, without forgetting? This mirrors how cognition operates: understanding is updated continuously throughout life, and tasks are addressed by selecting the relevant pieces of understanding rather than by reshaping the system into a task-specific tool. Once rich representations are achieved, only a few samples are needed to assemble them for downstream tasks; the substantive challenge is the representation itself.

Inspired by the specialization and composition of modules in the human brain~\cite{tafazoli2026building, tian2026domain}, we propose Modular Representations (MoRe) for principled continual representation learning. The core idea is that representations should decompose into modules organized by a hierarchy from \emph{fundamental} to \emph{specific}, where fundamental modules are reused as building blocks for more specific ones. Crucially, this hierarchy should reflect intrinsic organization in the data rather than be imposed by task boundaries or architectural design.

Sequential data offers a particularly clear setting in which to identify such modular structure. Cross-layer time-delayed effects flow unidirectionally from fundamental to specific layers, providing the asymmetry needed to make the hierarchy {identifiable} up to within-layer indeterminacies and a compatible layer order. Identifiability is what turns modules from a useful inductive bias into well-defined objects: with identifiable modules, decisions about what to reuse and what to expand at each adaptation step become principled rather than heuristic, and the global representation can be recovered incrementally from a sequence of partial views.

Our contributions are as follows.
(1) We propose Modular Representations (MoRe) for sequential data, decomposing the representation into a hierarchy that distinguishes fundamental from specific knowledge and reflects the data-generating process rather than task structure.
(2) We provide identifiability guarantees for the modular hierarchy under cross-layer time-delayed dependencies, an estimation procedure for the initial dataset, and a one-step adaptation algorithm that selectively reuses, aligns, and expands modules under principled criteria.
(3) On controlled synthetic benchmarks, we demonstrate factor recovery and plasticity-stability trade-offs. On real-world LLM activation data, we evaluate the emergence of hierarchical structure and the informativeness of the learned representations in one-step and continual learning settings.

%% file: tex/3_problem_setting.tex
\section{Problem Setting}

\paragraph{General Continual Representation Learning}

\begin{wrapfigure}{r}{0.5\linewidth}
    \centering
    \vspace{-0.5cm}
    \includegraphics[width=\linewidth]{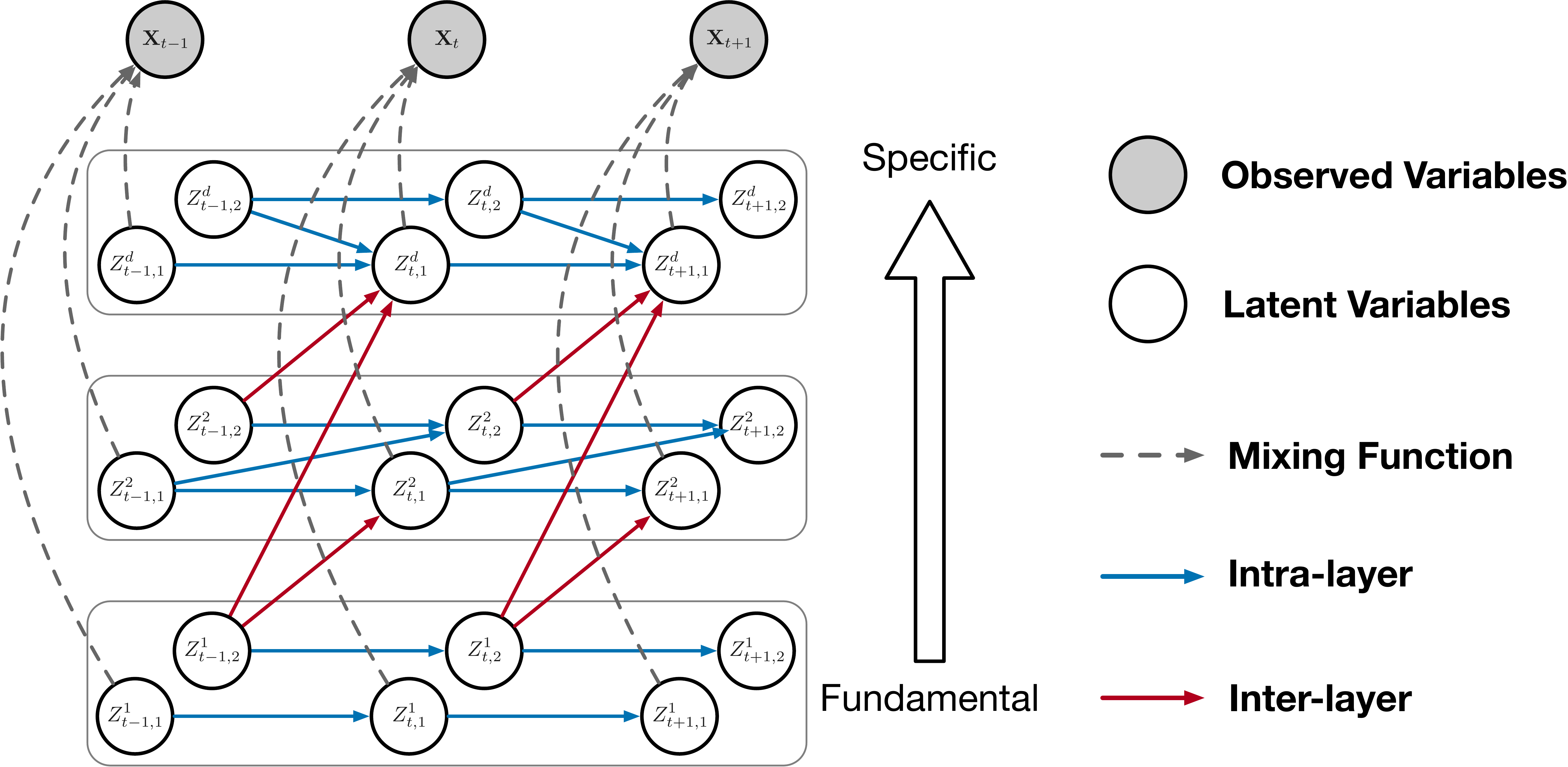}
    \caption{Sequential data with modular representations: higher layers are fundamental, lower layers specific; cross-layer time-delayed effects flow unidirectionally from fundamental to specific.}
    \vspace{-0.5cm}
    \label{fig:hierarchical_model}
\end{wrapfigure}

We consider an underlying global distribution over time series, $\mathbf{X}_t \sim p(\mathbf{X}_t)$, with $\mathbf{X}_t \in \mathcal{X}$. For simplicity, we denote density functions by $p(\mathbf{X})$ when no particular values need to be involved. In practice, data are observed sequentially in batches, each a partial distribution of the global process: $\mathbf{X}_t^{(s)} \sim p_s(\mathbf{X}_t)$ with $\mathrm{supp}(p_s) \subseteq \mathrm{supp}(p)$. Each batch thus provides a limited temporal and statistical view of the underlying data-generating process. The goal is to learn an encoder $f : \mathcal{X} \rightarrow \mathcal{Z}$ such that $\mathbf{Z}_t = f(\mathbf{X}_t)$ exhibits desirable properties (disentanglement, reduced dimensionality, computational efficiency), incrementally from $\{\mathcal{D}_s \sim p_s\}_{s=1}^\infty$ rather than jointly from the full sequence. We refer to this paradigm as \emph{General Continual Representation Learning}.

If $f$ is naively updated on each batch, its effective domain and range may fail to cover $\mathcal{X}$ or $\mathcal{Z}$, since each batch reveals only a partial view. Fundamentally, this limitation arises from the absence of explicit structure that would allow the domain and range of $f$ to be extended incrementally and coherently across stages. To make such extension possible, the encoder must mirror structure that is already present in the data.

\paragraph{Data Generating Process with Modular Representations}

Real-world observations are typically generated by structured processes in which fundamental knowledge forms the basis for more specific knowledge---a pattern reflected, for example, in how cognition reuses general principles to construct domain-specific understanding. We adopt this view explicitly. The observation $\mathbf{X}_t \in \mathcal{X} \subset \mathbb{R}^n$ is generated from latent representations organized into $d$ hierarchical layers $\mathbf{Z}_t^1, \ldots, \mathbf{Z}_t^d$ with each $\mathbf{Z}_t^i \in \mathbb{R}^m$, via
\begin{align}
    \mathbf{X}_t = g(\mathbf{Z}_t), \quad \mathbf{Z}_t = \bigcup_{i=1}^d \mathbf{Z}_t^i, \label{eq: hierarchical_model_mixing_func}
\end{align}
where $g : \mathbb{R}^{m \times d} \rightarrow \mathbb{R}^{n}$ is a nonlinear mixing function, invertible on $\mathcal{X}$ (hence $dm \leq n$). Higher layers encode fundamental, domain-agnostic knowledge (e.g., logic, physical principles); lower layers capture specialized, domain-specific information (e.g., finance, meteorology). Cross-layer dependencies are accordingly directed from higher to lower: abstract knowledge constrains specific representations, but not the reverse.

We further restrict cross-layer dependencies to be \emph{time-delayed} rather than instantaneous, since hierarchical transformations across layers necessarily incur a non-zero temporal or computational delay; allowing instantaneous cross-layer interactions would violate this construction. Within-layer dependencies, generated at comparable levels of abstraction, are unrestricted. Each layer is therefore
\begin{align}
    \mathbf{Z}^i_t = f_i \big( \{\mathbf{Z}^i_{t-\tau}\}_{\tau=0}^L,\; \{\mathbf{Z}^j_{t-\tau} : 1 \leq j < i\}_{\tau=1}^L,\; \boldsymbol{\epsilon}_t^i \big), \label{eq: hierarchical_model_layer_z}
\end{align}
with maximum lag $L$ and mutually independent noise $\boldsymbol{\epsilon}_t^i$. We focus on lag-$1$ in the following. A critical consequence is that the layer assignment is not arbitrary: moving any latent variable to a different layer breaks the unidirectionality of cross-layer time-delayed effects. The hierarchy is anchored by the data itself, and this anchoring is what enables identifiability in Section~\ref{sec: theory}.

\paragraph{From Modular Generative Process to Continual Learning}

This generative process gives the encoder $f$ a structure to mirror: $f$ decomposes into layer-wise modules that recover $\mathbf{Z}_t^1, \ldots, \mathbf{Z}_t^d$ from $\mathbf{X}_t$. As we show in Section~\ref{sec: theory}, the time-delayed unidirectional structure makes each layer's representation {identifiable} up to within-layer indeterminacies and a compatible layer order: without which recovered components would be entangled functions of all latent factors, providing no handles for selective update. With identifiable modules, decisions about which to reuse, refine, or expand at each stage can be grounded in whether each module's assumed structure is preserved on the new data, rather than chosen heuristically; fundamental modules naturally serve as reusable building blocks across stages. We make these claims precise in Section~\ref{sec: theory} and translate them into estimation and adaptation procedures in Section~\ref{sec: method}.

%% file: tex/4_theory.tex
\section{Theoretical Guarantees}\label{sec: theory}

Given the modular representations, we now establish identifiability of the learned representations when the data-generating process follows the assumed structure. This is crucial: only with identifiable representations can we develop principled learning algorithms and provide meaningful interpretations.
Our identifiability result proceeds in three steps. First, Theorem~\ref{thm1: latent variables} shows that instantaneous relations identify the latent variables up to within-layer transformations. Second, Theorem~\ref{thm2: latent hierarchy} shows that time-delayed effects identify the hierarchical order up to compatible permutations. Finally, Corollary~\ref{thm3: latent process} combines these results to establish identifiability of the full latent process. Note that all the proofs are given in single lag case, and the extension to multiple lags has been shown in ~\cite{yao2022tdrl,li2025idol}. 

\begin{restatable}[Identifiability of Latent Variables]{theorem}{latentvariableidentifiability}
    Suppose the observed sequential data $\mathbf{X}_t$ follows the data generating process with modular representations. Suppose the following two conditions C1 and C2 hold, then any mixture among $\mathbf{Z}$ can only happen within the same layer. Furthermore, when C3 holds, $\mathbf{Z}$ are component-wise identifiability. 
    \begin{itemize}
        \item [C1] (Smooth and Positive Density). The conditional density $p(\mathbf{Z}_t|\mathbf{Z}_{t-1})$ is three times continuously differentiable and positive in $\mathbf{R}^{md}$. 
        \item [C2] (Sufficient Variability). Let $\mathbf{W}_{t,i}^a \triangleq (\frac{\partial^2 \log p({\mathbf{Z}}_t| {\mathbf{Z}}_{t-1})}{\partial Z_{t,i}^a \partial Z^{l'}_{t-1, o}})_{l'\in[d],o\in[m]}^{\intercal}, \text{ for } i\in[m], a\in[d]$, $\mathbf{U}_{t,i}^a \triangleq (\frac{\partial^3 \log p({\mathbf{Z}}_t| {\mathbf{Z}}_{t-1})}{\partial {Z_{t,i}^a}^2 \partial Z^{l'}_{t-1, o}})_{l'\in[d],o\in[m]}^{\intercal} \text{ for } i\in[m], a\in[d]$, and $\mathbf{V}_{t,i,j}^a \triangleq (\frac{\partial^3 \log p({\mathbf{Z}}_t| {\mathbf{Z}}_{t-1})}{\partial Z_{t,i}^a \partial Z_{t,j}^{a} \partial Z^{l'}_{t-1, o}})_{l'\in[d],o\in[m]}^{\intercal} \text{ for } i<j\in[m], a\in[d]$. For each value of $\mathbf{Z}_t$, all functions $\mathbf{W}_{t,1}^1, \mathbf{U}_{t,1}^1, \mathbf{V}_{t,1,2}^1, \cdots, \mathbf{W}_{t,m}^d, \mathbf{U}_{t,m}^d, \mathbf{V}_{t,m-1,m}^d$ are linearly independent.
        \item [C3] (Sparse Latent Process for Each Layer). For any $Z_{t,i}^a$, either its intimate neighbor set is empty, or it has a child with empty intimate neighbor set. 
    \end{itemize}
    \label{thm1: latent variables}
\end{restatable}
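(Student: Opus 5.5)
The plan follows the standard change-of-variables argument from temporally causal representation learning (TDRL/IDOL-type proofs), adapted to the block structure of the layers.

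\textbf{Setup.} Let $(\hat g,\hat p)$ be any alternative model that satisfies the data generating process with modular representations and matches the observed distribution $p(\mathbf{X}_t,\mathbf{X}_{t-1})$. Invertibility of $g$ and $\hat g$ gives a diffeomorphism $h=\hat g^{-1}\circ g$ with $\hat{\mathbf{Z}}_t=h(\mathbf{Z}_t)$. The same $h$ acts at every time step. The change of variables then yields
\begin{equation*}
\log \hat p(\hat{\mathbf{Z}}_t\mid\hat{\mathbf{Z}}_{t-1})=\log p(\mathbf{Z}_t\mid \mathbf{Z}_{t-1})-\log|\det \mathbf{J}_h(\mathbf{Z}_t)|.
\end{equation*}
Because the Jacobian term depends only on $\mathbf{Z}_t$, differentiating once with respect to a lagged coordinate $Z^{l'}_{t-1,o}$ removes it.

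\textbf{First claim: mixing only within layers.} The key structural fact is that, given $\mathbf{Z}_{t-1}$, the layers are mutually independent, because instantaneous dependencies are only within-layer and the noises $\boldsymbol{\epsilon}^i_t$ are independent. Hence
\begin{equation*}
\log p(\mathbf{Z}_t\mid\mathbf{Z}_{t-1})=\sum_a \log p(\mathbf{Z}^a_t\mid \mathbf{Z}_{t-1}),
\end{equation*}
and the same block decomposition holds for $\hat p$. The steps are:
\begin{enumerate}
\item Fix two estimated coordinates $\hat Z^{a}_{t,k}$ and $\hat Z^{b}_{t,l}$ in different layers, $a\neq b$. The mixed second derivative $\partial^2\log\hat p/\partial \hat Z^a_{t,k}\partial \hat Z^b_{t,l}$ vanishes.
\item Expand this zero through the chain rule in terms of the true coordinates $Z^{c}_{t,i}$. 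Terms with second derivatives of $\log p$ survive only within a single true layer $c$, and they involve the entries $\partial Z^c_{t,i}/\partial \hat Z^a_{t,k}$, their products, and second derivatives of the inverse map.
\item Differentiate the resulting identity with respect to each $Z^{l'}_{t-1,o}$. This kills the $\log|\det\mathbf{J}_h|$ contribution and leaves a linear combination of the vectors $\mathbf{W}^c_{t,i}$, $\mathbf{U}^c_{t,i}$ and $\mathbf{V}^c_{t,i,j}$ that equals zero.
\end{enumerate}
By C2 these vectors are linearly independent, so every coefficient vanishes. In particular the coefficient of $\mathbf{U}^c_{t,i}$ is $\frac{\partial Z^c_{t,i}}{\partial \hat Z^a_{t,k}}\frac{\partial Z^c_{t,i}}{\partial \hat Z^b_{t,l}}=0$ for all $a\neq b$. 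So each true coordinate depends on estimated coordinates of at most one estimated layer. Combining this with invertibility of $\mathbf{J}_h$, which rules out any row of $\mathbf{J}_{h^{-1}}$ being zero, and with the fact that every layer has dimension $m$, a counting/permutation argument shows that $\mathbf{J}_h$ is block-diagonal up to a permutation of blocks. This is the claim that mixing occurs only within layers.

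\textbf{Second claim: component-wise identifiability under C3.} Within a layer, $\log p(\mathbf{Z}^a_t\mid\mathbf{Z}_{t-1})$ no longer factorizes over components, because of instantaneous edges. Here I would follow the sparsity-based argument of IDOL (Li et al.), which the paper cites. The coefficients of the $\mathbf{V}^a_{t,i,j}$ terms vanish for exactly the pairs $(i,j)$ that are non-adjacent in the Markov network of $\mathbf{Z}^a_t$ given $\mathbf{Z}_{t-1}$. This identifies the Markov network up to permutation and constrains $\mathbf{J}_h$ within each block: $Z^a_{t,i}$ can only be mixed with its intimate neighbors, meaning nodes adjacent to $Z^a_{t,i}$ and to all of its neighbors. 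C3 then supplies the final step. If a variable's intimate neighbor set is empty, its block of the Jacobian is already a single entry. Otherwise it has a child with empty intimate neighbor set, and the already-identified child, together with the identified edge structure, pins the variable down. An induction over the sparse structure gives a component-wise transformation within each layer.

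\textbf{Main obstacle.} I expect the bookkeeping in step 2 to be the hardest part: writing out the second-order chain-rule expansion cleanly enough that the vanishing of the $\mathbf{U}$-coefficients forces the block structure, with the $\mathbf{W}$ and $\mathbf{V}$ terms handled separately. The C3 step should reduce largely to the existing IDOL argument, applied layer by layer once the block-diagonal structure is established.
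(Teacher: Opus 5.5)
Your route is the paper's, step for step: change of variables through $h$, vanishing of the cross-layer mixed second derivatives of the estimated log-density, chain-rule expansion, differentiation in a lagged coordinate to remove the log-determinant, C2 to kill every coefficient, and deferral to IDOL for the within-layer step under C3.

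The gap is in how you get from the vanishing coefficients to block structure. The $\mathbf{U}$-coefficients $\frac{\partial Z^c_{t,i}}{\partial \hat Z^a_{t,k}}\frac{\partial Z^c_{t,i}}{\partial \hat Z^b_{t,l}}=0$ for $a\neq b$ show that each row of $\mathbf{J}_{h^{-1}}=\partial\mathbf{Z}_t/\partial\hat{\mathbf{Z}}_t$ is supported on the columns of one estimated layer. Adding invertibility and equal layer sizes only splits the rows into $d$ groups of $m$ rows. Nothing in that counting forces a group to be a whole true layer. For example, with $d=m=2$, attach $\{Z^1_{t,1},Z^2_{t,1}\}$ to $\hat{\mathbf{Z}}^1_t$ and $\{Z^1_{t,2},Z^2_{t,2}\}$ to $\hat{\mathbf{Z}}^2_t$. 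Every $\mathbf{W}$- and $\mathbf{U}$-coefficient vanishes, yet each estimated layer mixes two true layers. This is exactly what the theorem, and its use in Theorem~\ref{thm2: latent hierarchy}, must exclude.

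The repair uses the $\mathbf{V}$-coefficients, which you derive but then set aside. Suppose $Z^c_{t,i}$ is attached to estimated layer $a$, and $Z^c_{t,j}$, from the same true layer, is attached to $b\neq a$. Pick $k,l$ with $\partial Z^c_{t,i}/\partial\hat Z^a_{t,k}\neq 0$ and $\partial Z^c_{t,j}/\partial\hat Z^b_{t,l}\neq 0$. The coefficient of $\mathbf{V}^c_{t,i,j}$ is $\frac{\partial Z^c_{t,i}}{\partial \hat Z^a_{t,k}}\frac{\partial Z^c_{t,j}}{\partial \hat Z^b_{t,l}}+\frac{\partial Z^c_{t,j}}{\partial \hat Z^a_{t,k}}\frac{\partial Z^c_{t,i}}{\partial \hat Z^b_{t,l}}$. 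Its second product vanishes because $Z^c_{t,j}$ does not depend on $\hat{\mathbf{Z}}^a_t$, so the coefficient is nonzero, contradicting C2.

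Apply this to every within-layer pair, or more economically along the edges of a connected within-layer Markov network. That attaches each true layer to a single estimated layer. Only then does your counting give a bijection of layers, i.e.\ $\mathbf{J}_h$ block-diagonal up to a block permutation.

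This is where within-layer instantaneous dependence does the real work. The argument needs nonzero $\mathbf{V}^c_{t,i,j}$ along a connecting set of pairs in each layer; C2 as stated assumes this for every pair. If a layer's instantaneous graph were disconnected, instantaneous structure alone could not rule out regroupings like the one above. The paper's Step 6 asserts the block conclusion directly from the three families of vanishing coefficients without this argument, so you should make it explicit rather than rely on counting.
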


\paragraph{Proof Sketch} 
The proof exploits the modular structure: instantaneous dependencies occur only within the same layer. Consequently, conditional on past latent variables, current latent variables from different layers are independent. This cross-layer conditional independence rules out mixing across layers, so any remaining mixing must occur within a layer. Under the sparsity condition C3, defined via the intimate neighbor set (Def.~\ref{def: intimate_set}), existing within-layer results imply component-wise identifiability. Full proof is provided in Appendix~\ref{app: proof_thm1}.

\medskip

Having identified the latent variables up to layer-wise transformations, we next determine how the layers are ordered. The key observation is that time-delayed effects are directional and thus impose constraints on admissible layer permutations.

\begin{restatable}[Identifiability of Hierarchical Order]{theorem}{hierarchyidentifiability}
\label{thm2: latent hierarchy}
Suppose the observed sequential data $\mathbf{X}_t$ follows the data-generating process with modular representations. Suppose the assumptions of Theorem~\ref{thm1: latent variables} hold. Suppose further that:

\begin{itemize}
    \item[{C4}] (Hierarchical Mask Constraint). The estimated transition model is constrained by a layer-wise hierarchical mask: under an estimated order $\hat\pi$, time-delayed effects from layer $a$ to layer $b$ are allowed only if $\hat\pi^{-1}(a) < \hat\pi^{-1}(b)$.

    \item[{C5}] (Unique Compatible Order). The true layer-level time-delayed graph admits a unique compatible order.
\end{itemize}

Then any estimated hierarchical order that induces the same observational distribution as the true model must be compatible with the true layer-level time-delayed graph. In particular, the hierarchy is identifiable up to compatible order. Moreover, under C5, the hierarchical order is uniquely identifiable.
\end{restatable}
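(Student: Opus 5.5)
We plan to reduce the statement to a combinatorial fact about directed graphs on layers, using Theorem~\ref{thm1: latent variables} to move from estimated to true latent layers. By that theorem, any estimated model that reproduces the observational distribution recovers latents whose mixing is confined to a single layer. So there is a layer bijection $\sigma$ and invertible within-layer maps $h_a$ with $\hat{\mathbf{Z}}^{\sigma(a)}_t = h_a(\mathbf{Z}^a_t)$. The estimated transition $p(\hat{\mathbf{Z}}_t\mid\hat{\mathbf{Z}}_{t-1})$ is then obtained from the true one by the block-diagonal change of variables $h=(h_1,\dots,h_d)$ applied at both $t$ and $t-1$. The Jacobian factor depends only on $\mathbf{Z}_t$, so it does not affect the cross-time dependence.

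The first key step is to show that the layer-level time-delayed graph is invariant under such a change of variables. Define a true edge $a\to b$ ($a\neq b$) when $\partial^2 \log p(\mathbf{Z}_t\mid\mathbf{Z}_{t-1})/\partial Z^b_{t,i}\,\partial Z^a_{t-1,o}\not\equiv 0$ for some $i,o$. Because the noises are mutually independent across layers, this conditional density factorizes over layers, so this is the same as $p(\mathbf{Z}^b_t\mid\mathbf{Z}_{t-1})$ genuinely depending on $\mathbf{Z}^a_{t-1}$. Applying the chain rule, the mixed partials in hatted coordinates equal $J_{h_b}^{-\top}(\cdot)\,J_{h_a}^{-1}$ times the true mixed-partial block, where $J_{h_a}, J_{h_b}$ are the Jacobians of $h_a, h_b$. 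Both Jacobians are invertible, so a block is nonzero if and only if the original block is nonzero. Hence the estimated layer graph equals the true one relabeled by $\sigma$: $\sigma(a)\to\sigma(b)$ is an estimated edge exactly when $a\to b$ is a true edge.

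The second step uses the mask C4. Every true edge $a\to b$ yields an estimated edge $\sigma(a)\to\sigma(b)$, which the mask allows only if $\hat\pi^{-1}(\sigma(a))<\hat\pi^{-1}(\sigma(b))$. Therefore the order $\hat\pi^{-1}\circ\sigma$ on the true layers places every edge forward, i.e.\ it is a topological order of the true layer-level graph. This is what it means for the estimated hierarchy to be compatible with the true graph, and it gives identifiability up to compatible order. Conversely, any compatible order is attainable: relabel the layers and use the true transition, which satisfies the mask. So nothing finer than this can be claimed without further assumptions. Under C5 the topological order is unique, so $\hat\pi^{-1}\circ\sigma$ is determined, and the hierarchical order is uniquely identified.

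The main obstacle is the first step, showing that edge presence survives reparametrization. We need the within-layer maps to act identically on the $t$ and $t-1$ slices, which follows from stationarity of the mixing function $g$ and the time-invariance of the estimated encoder. We also need that an edge that is nonzero on a set of positive measure stays nonzero, which holds because the Jacobians are pointwise invertible and the density is smooth by C1. A second subtlety is the direction of the edges. The time-delayed effects in the true model go from fundamental to specific, so the true graph is already acyclic and at least one compatible order exists; without this, C4 could not be satisfied and the statement would be vacuous.
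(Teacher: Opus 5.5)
Your proposal is correct and follows the paper's proof. Theorem~\ref{thm1: latent variables} reduces the ambiguity to a permutation of layers; a true lagged edge reversed by $\hat\pi$ would be forbidden by the mask C4, so the estimated model could not reproduce the distribution; and C5 then forces uniqueness. Your additions tighten the argument without changing its route: the explicit layer matching $\sigma$, the chain-rule argument that the density-defined layer-level lagged graph is invariant under block-diagonal reparametrization (this justifies the step the paper only asserts, that the masked model ``cannot represent'' the true dependence), and the converse showing every compatible order is attainable.
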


\paragraph{Proof Sketch}
By Theorem~\ref{thm1: latent variables}, cross-layer mixing is impossible, so the only remaining ambiguity is a permutation of layers. A compatible order is one that does not reverse any true time-delayed effect. Suppose an estimated order violates this constraint. Then a true effect from layer $a$ to layer $b$ would be disallowed by the mask in C4, so the estimated model could not reproduce the true transition distribution. This contradicts distributional equivalence. Therefore, only compatible orders are admissible. If C5 holds, the compatible order is unique, yielding identifiability.

\medskip

We now combine the two results to characterize identifiability of the entire latent process.

\begin{restatable}[Identifiability of the Latent Process]{corollary}
{processidentifiability}
\label{thm3: latent process}
Suppose the observed sequential data $\mathbf{X}_t$ follows the data-generating process with modular representations. Suppose the assumptions of Theorem~\ref{thm1: latent variables} and Theorem~\ref{thm2: latent hierarchy} hold. Suppose further that the latent variables are within-layer component-wise identifiable, for example by \cite{li2025idol}. Then the full latent process is identifiable up to compatible hierarchical order. If C5 holds, then the full latent process is identifiable up to the remaining component-wise indeterminacies.
\end{restatable}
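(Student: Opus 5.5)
The plan is to chain Theorem~\ref{thm1: latent variables} and Theorem~\ref{thm2: latent hierarchy} and then transport identifiability from the latent variables to the transition and mixing functions. Let $(\hat g, \{\hat f_i\}, p(\hat{\boldsymbol{\epsilon}}))$ be any estimated model obeying the modular generative process and the mask of C4 that induces the same observational distribution $p(\mathbf{X}_t)$ as the true model. Since $g$ and $\hat g$ are invertible on $\mathcal{X}$, we obtain a map $h = \hat g^{-1}\circ g$ with $\hat{\mathbf{Z}}_t = h(\mathbf{Z}_t)$. By Theorem~\ref{thm1: latent variables} (C1, C2), $h$ factors into blocks: there is a permutation $\sigma$ of $[d]$ with $\hat{\mathbf{Z}}^{\sigma(i)}_t = h_i(\mathbf{Z}^i_t)$ for invertible $h_i:\mathbb{R}^m\to\mathbb{R}^m$. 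The assumed within-layer component-wise identifiability (e.g.\ via \cite{li2025idol}, or C3) then refines each $h_i$ into a permutation composed with coordinate-wise invertible maps.

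Next, I would pin down $\sigma$. Theorem~\ref{thm2: latent hierarchy} says that under C4 the induced layer order must be compatible with the true layer-level time-delayed graph. So $\sigma$ lies in the set of compatible reorderings, and under C5 it equals the identity relative to the true order, up to relabeling of layers with identical roles. This yields identifiability of the latent variables up to compatible hierarchical order and component-wise indeterminacies.

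The third step is to lift this to the full latent process, meaning the transition densities $p(\mathbf{Z}_t\mid\mathbf{Z}_{t-1})$, the layer-level graph, and $g$. Because $h$ is a fixed invertible map applied identically at every time step, the change-of-variables formula gives
\[
p(\hat{\mathbf{Z}}_t\mid\hat{\mathbf{Z}}_{t-1}) = p\bigl(h^{-1}(\hat{\mathbf{Z}}_t)\mid h^{-1}(\hat{\mathbf{Z}}_{t-1})\bigr)\,\bigl|\det J_{h^{-1}}(\hat{\mathbf{Z}}_t)\bigr|.
\]
Hence the estimated transition is the true transition conjugated by $h$. Also $\hat g = g\circ h^{-1}$. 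Since $h$ is block-diagonal up to a compatible permutation and component-wise within blocks, the conjugation preserves the support pattern of cross-layer lagged dependencies and of within-layer instantaneous dependencies. Component-wise invertible maps do not create or destroy nonzero partial derivatives. Therefore the layer-level time-delayed graph, the within-layer structure, and the conditional densities are all recovered up to the same indeterminacies.

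I expect this graph-preservation argument to be the main obstacle. It requires showing that a nonzero cross-partial $\partial^2\log p/\partial Z^b_{t,i}\partial Z^a_{t-1,o}$ maps to a nonzero cross-partial in the estimated model. I would handle it with the chain rule, using the diagonal Jacobian of each component-wise map, which has nonvanishing entries. For nonunique compatible orders, I would state the result as identifiability up to the equivalence class of compatible orders. When C5 holds this class collapses, leaving only component-wise indeterminacies.
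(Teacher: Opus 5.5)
Your proposal is correct and follows the same route as the paper's proof: Theorem~\ref{thm1: latent variables} gives the layer-block structure of the latent map, the assumed within-layer component-wise identifiability reduces each block to a permutation with coordinate-wise maps, and Theorem~\ref{thm2: latent hierarchy} (with C5) restricts and then fixes the layer order. Your third step is an addition rather than a different argument: you conjugate the transition density by $h$ via change of variables and use the nonvanishing diagonal derivatives to show that the lagged and within-layer dependency patterns, and $\hat g = g\circ h^{-1}$, carry over, whereas the paper only asserts that the layer-level time-delayed structure is identified.
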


\paragraph{Proof Sketch}
Theorem~\ref{thm1: latent variables} identifies the layer partition by ruling out cross-layer mixing. Within each layer, component-wise identifiability resolves the remaining ambiguity. Theorem~\ref{thm2: latent hierarchy} then restricts the admissible layer permutations to compatible orders determined by time-delayed effects. Therefore, the full latent process is identifiable up to compatible order. If C5 holds, the order is unique, and the entire process is identifiable up to standard component-wise indeterminacies.

\paragraph{Remark} The focus of the work is to recovering the hierarchical structure, i.e., cross-layer relations, and the identifiability of the components free from the permutation cross layers. Thereofore, in the following sections, instantaneous relations will not included, in order to make it a more concentrated study, and \cite{li2025idol} has provided a comprehensive study.

Given the identifiability of the hierarchical modular representation, we now design an estimation procedure to recover these representations, and an adaptation procedure that preserves previously learned components while recovering new components from incoming data. 

%% file: tex/5_method.tex
\section{Method}\label{sec: method}

We propose \textbf{MoRe} (Modular Representation Learning), a framework for unsupervised continual representation learning from sequential data. MoRe learns hierarchical modular representations on an initial dataset and adapts them across subsequent datasets by selectively reusing previously learned modules, aligning shared representations, and expanding only when new modules are needed.

\subsection{Learning Modular Representations on the Initial Dataset}
\label{method: Learning Modular Representations on the Initial Dataset}

We first estimate the modular representation from the initial dataset $\mathcal{D}_0$. The objective is to learn an encoder $f$ that maps observations $\mathbf{X}_t$ to hierarchical modular representations $\mathbf{Z}_t$, exploiting the asymmetry established by Theorems~\ref{thm1: latent variables}--\ref{thm2: latent hierarchy}: cross-layer time-delayed effects are unidirectional from fundamental to specific layers.

\paragraph{Coordinate Projection}
The observation $\mathbf{X}_t \in \mathbb{R}^n$ typically lies near a lower-dimensional subspace, and the latent representation has total dimension $md \leq n$ by construction. We therefore fit a coordinate basis $\mathbf{V}_0 \in \mathbb{R}^{n \times d_0}$ from the principal directions of $\mathcal{D}_0$, with mean $\boldsymbol{\mu}_0$, and project the data: $\mathbf{X}^{\mathrm{proj}}_t = (\mathbf{X}_t - \boldsymbol{\mu}_0)\mathbf{V}_0$. This removes redundant observation dimensions before encoding and establishes a shared coordinate system in which subsequent datasets can be expressed (Section~\ref{sec:adaptation}), making module reuse well-defined under shifting input distributions. The encoder $f$ is learned on $\mathbf{X}^{\mathrm{proj}}_t$.

\paragraph{Joint Estimation}
When model capacity permits, we estimate all layers simultaneously by maximizing the conditional likelihood $p(\mathbf{X}^{\mathrm{proj}}_t \mid \mathbf{X}^{\mathrm{proj}}_{t-1})$ subject to a layer-wise structural constraint on the latent transitions. By the change-of-variables formula, this yields the objective
\begin{equation}
\mathcal{L}_{\mathrm{joint}}
= \mathbb{E}_{\mathbf{X}^{\mathrm{proj}}_t, \mathbf{X}^{\mathrm{proj}}_{t-1}}\!\left[-\log p(\mathbf{Z}_t \mid \mathbf{Z}_{t-1})\right]
+ \log\!\left|\det \mathbf{J}_{f^{-1}}\right|
+ \gamma\, \|\mathbf{J}_B\|_1,
\end{equation}
where $\mathbf{J}_B = \partial \mathbf{Z}_t / \partial \mathbf{Z}_{t-1}$ is the time-delayed transition Jacobian, constrained to be block lower-triangular according to the layer order, and $\gamma$ controls the sparsity of cross-layer dependencies. Prior work on temporal latent identification provides both nonparametric and parametric instantiations~\citep{yao2022tdrl, li2025idol, song2025llm}.

\paragraph{Layer-wise Estimation}
When joint estimation is computationally infeasible, MoRe estimates the hierarchy progressively: fundamental layers are recovered first from the marginal temporal structure, and higher layers are then estimated conditional on the learned lower-level modules. This variant mirrors the developmental acquisition of structured representations; full details and empirical evaluation are in Appendix~\ref{app: layer-wise estimation}.

\subsection{Principled Continual Adaptation}\label{sec:adaptation}

Given a new dataset $\mathcal{D}_k$ and the previously learned encoder $f_{k-1}$, basis $\mathbf{V}_{k-1}$, and transition $\mathbf{J}_B^{(k-1)}$, adaptation proceeds in two phases: an inference-only \emph{fitting and assessment} phase that uses existing structure to comprehend new data and identify what is missing, followed by a \emph{selective updating} phase that aligns reused modules, introduces new ones where needed, and select a sparse set.

\paragraph{Fitting and Assessment}
Just as one first interprets new observations through current understanding before revising belief, MoRe begins by applying $f_{k-1}$ to $\mathcal{D}_k$ without any training. The new data are projected into the existing coordinate system, $\mathbf{X}^{\mathrm{proj}} = (\mathbf{X}_k - \boldsymbol{\mu}_{k-1})\mathbf{V}_{k-1}$, and passed through the existing modules to obtain candidate latents and transition residuals. Two diagnostics are then computed. First, for each existing module $j$, a static \emph{self-consistency score} $a_j \in [0,1]$ measures whether $j$'s assumed generative structure is preserved on $\mathcal{D}_k$ — specifically, whether $j$ remains temporally autocorrelated and whether its past leaks into other modules' residuals (cf.\ the cross-layer conditional independence underlying Theorem~\ref{thm1: latent variables}). Modules whose structure is preserved score high; modules that no longer fit score low. Second, the reconstruction residual in observation space, $\mathbf{R} = \mathbf{X}_k - (\mathbf{X}^{\mathrm{proj}}\mathbf{V}_{k-1}^\top + \boldsymbol{\mu}_{k-1})$, is analyzed via its principal directions, yielding candidate new coordinates $\mathbf{V}^{\mathrm{new}}$ orthogonal to $\mathbf{V}_{k-1}$ and a corresponding allocation of candidate new modules. The transition is extended to a layer-respecting block structure $\mathbf{J}_B^{\mathrm{ext}}$, allowing time-delayed effects from existing to candidate new modules but forbidding the reverse. 
Detailed forms of $a_j$ and the residual analysis are deferred to Appendix~\ref{app: adaptation-details}.

\paragraph{Selective Updating}
Adaptation then commits to which modules to retain and how to refine them. Each module is associated with a Hard Concrete gate~\citep{louizos2017learning} $g_j \in [0,1]$, since module selection consists of independent binary decisions rather than a categorical choice. The self-consistency score $a_j$ informs only the gate \emph{prior} (its initialization), not a runtime suppressor: existing modules with high $a_j$ start with positive logits, those with low $a_j$ start negative, and the magnitude of this initialization is itself attenuated when no module is clearly self-consistent, leaving selection to the data (details in Appendix~\ref{app:gate-init}). Reused modules are also passed through a learnable alignment $T$, initialized to identity, which permits small coordinate adjustment while keeping it close to identity. The adapted representation is
\begin{equation}
\mathbf{Z}_t = T\!\left(f_{k-1}(\mathbf{X}^{\mathrm{proj}}_t)\right) \odot \mathbf{g}^{\mathrm{old}}
\;\oplus\;
f^{\mathrm{new}}\!\left((\mathbf{X}_k - \boldsymbol{\mu}_{k-1})\mathbf{V}^{\mathrm{new}}\right)_t \odot \mathbf{g}^{\mathrm{new}},
\end{equation}
with residual $\mathbf{E}_t = \mathbf{Z}_t - \mathbf{J}_B^{\mathrm{ext}}(\mathbf{Z}_{t-1})$. The trainable components are $T$, the new encoder $f^{\mathrm{new}}$, the new blocks of $\mathbf{J}_B^{\mathrm{ext}}$, and the gate logits; $f_{k-1}$ and the existing block of $\mathbf{J}_B$ remain frozen. The objective combines a data fit term, an alignment regularizer, sparsity on gates, and sparsity on the new transition blocks:
\begin{equation}
\mathcal{L}_{\mathrm{adapt}}
= \mathcal{L}_{\mathrm{data}}(\mathbf{E}_t, f^{\mathrm{active}})
+ \lambda_{\mathrm{align}} \mathcal{R}(T)
+ \lambda_{\mathrm{sp}}^{\mathrm{old}} \mathcal{L}_0(\mathbf{g}^{\mathrm{old}})
+ \lambda_{\mathrm{sp}}^{\mathrm{new}} \mathcal{L}_0(\mathbf{g}^{\mathrm{new}})
+ \lambda_J \|\mathbf{J}_B^{\mathrm{new}}\|_1.
\end{equation}
Here $\mathcal{L}_{\mathrm{data}}$ is the change-of-variables likelihood under the modular generative process, restricted to active modules; $\mathcal{R}(T)$ keeps $T$ close to identity, which provides a continuous interpolation between full reuse and partial modification rather than a discrete reuse/modify dichotomy; $\mathcal{L}_0$ is the expected number of active gates under the Hard Concrete relaxation; and $\|\mathbf{J}_B^{\mathrm{new}}\|_1$ encourages parsimonious new dependencies. Module selection is committed at the end of training by binarizing the gates: existing modules with $g^{\mathrm{old}}_j = 0$ are deactivated for $\mathcal{D}_k$ but retained in the library, and candidate new modules with $g^{\mathrm{new}}_j = 1$ are added. The full algorithm is given in Appendix~\ref{app:algorithm}.

\paragraph{Overview of MoRe}
MoRe addresses continual representation learning through two coupled mechanisms: a hierarchical modular representation, identifiable via the unidirectional cross-layer time-delayed structure of sequential data, and a two-phase adaptation procedure that turns identifiability into actionable updates. The first phase interprets new data with existing modules without training, scoring each module for self-consistency and surfacing missing structure as candidate directions; the second phase selectively aligns, retains, or expands modules through learnable gates whose prior is shaped by the self-consistency scores. The result is an adaptation step that reuses fundamental structure by default, and only modifies or expands modules when the data require it, to preserve prior knowledge while incrementally extending the global module library.

%% file: tex/6_exp.tex
\section{Experiments}
Our experiments are organized to verify MoRe both as a principled mechanism and as a practically useful representation learner. Controlled synthetic experiments isolate the questions of identifiability, plasticity-stability, and selective module allocation under known ground truth, while real-world LLM activation experiments examine whether the learned hierarchical structure emerges in natural data and translates into label-efficient downstream prediction.

\subsection{Synthetic Experiments}
We separate scale from mechanism. Large-scale synthetic experiments test identifiability and structural recovery, while low-dimensional controlled systems allow exact analysis of adaptation, forgetting, and module selection.

\paragraph{Setup.}
We generate sequential latent data following the modular generative process of Equations~\eqref{eq: hierarchical_model_mixing_func} and~\eqref{eq: hierarchical_model_layer_z}, with both linear ($md \in \{4, \dots, 128\}$) and nonlinear ($md \in \{4, \dots, 16\}$) mixing across depths $L \in \{2, 3, 4\}$. For S-RQ1 we sweep latent dimensionalities and depths to test identifiability at scale; for S-RQ2 and S-RQ3 we focus on a controlled linear setting with seven transition scenarios spanning no shift, shared-factor shifts, missing or newly introduced specific factors, irrelevant-old-module presence, and multi-stage sequences (full specifications in Appendix~\ref{app: real-world}). We measure latent recovery via the Mean Correlation Coefficient (MCC), forgetting by the drop in old-factor MCC after adaptation, and gate accuracy by whether reused and expanded modules are correctly assigned. We compare against fine-tuning, reuse-only, and ablations of MoRe's alignment matrix $T$ and sparsity penalties.


\paragraph{S-RQ1: Can MoRe identify hierarchical components?}

\begin{figure*}[h!]
\centering
\begin{minipage}[t]{0.59\textwidth}  
    \vspace{0pt}
    \centering
\includegraphics[width=\linewidth] {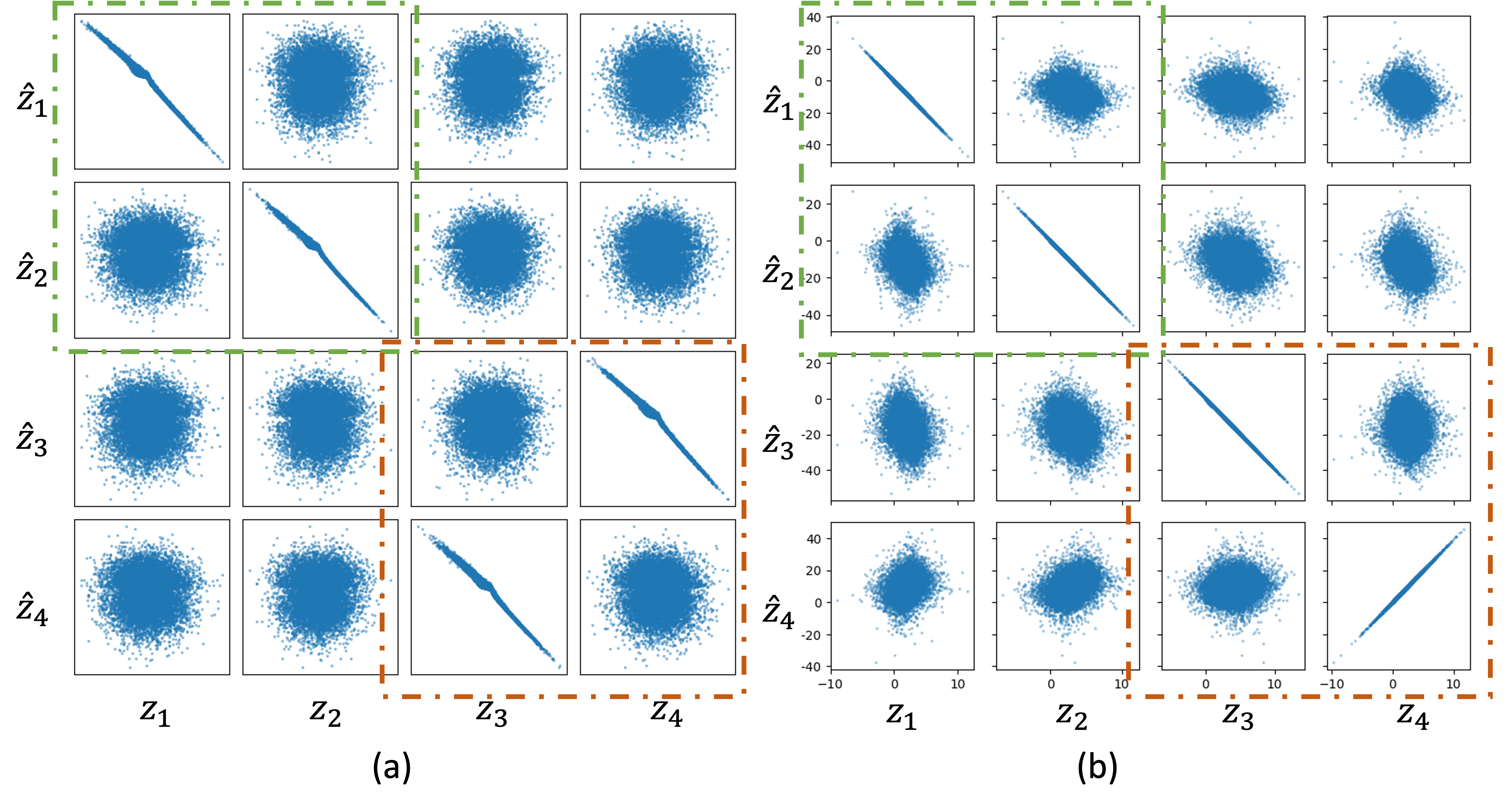}
    \vfill
    \captionof{figure}{The scatter plot of estimated latents with true for (a) nonlinear case, and (b) linear case.}
    \label{scatter}
\end{minipage}
\hfill
\begin{minipage}[t]{0.40\textwidth}
    \vspace{0pt}
    \centering
    \begin{tabular}{c c c} 
    \hline
    \textbf{Type} & \textbf{Dimensions of layers} & \textbf{MCC} \\
    \hline
    \multirow{5}{*}{Linear}
    & $[2,2]$ & 0.99 \\
    & $[4,4]$ & 0.99 \\
    & $[8,8,8]$ & 0.99 \\
    & $[16,16,16,16]$ & 0.93 \\
    & $[32,32,32,32]$ & 0.85 \\
    \hline
    \multirow{4}{*}{Nonlinear}
    & $[2,2]$ & 0.98 \\
    & $[4,4]$ & 0.98 \\
    & $[8,8]$ & 0.98 \\
    & $[3,3,3]$ & 0.97 \\
   & $[4,4,4,4]$ & 0.96 \\
    \hline
    \end{tabular} 
    \captionof{table}{Comparison of MCC for Linear and Nonlinear Models}\label{mcc table}
\end{minipage}

\end{figure*}

In this section, we empirically evaluate our proposed framework using synthetic time series data generated through fixed causal processes as illustrated in Equation~\eqref{eq: hierarchical_model_layer_z}, ~\eqref{eq: hierarchical_model_mixing_func} and Figure~\ref{fig:hierarchical_model}. To validate our theoretical results across varying complexities, we investigate linear generation with overall latent dimensions $md \in \{4, \dots, 128\}$ and nonlinear generation with $md \in \{4, \dots, 16\}$ across depths of $L\in \{2, 3, 4 \}$.

Quantitatively, we employ the Mean Correlation Coefficient (MCC) to measure the alignment between ground-truth and recovered latents. As shown in Table~\ref{mcc table}, our method achieves satisfying performance across both regimes, consistently recovering the latent space even as dimensionality increases. These results empirically confirm the identifiability of our model and validate the strength of our theoretical guarantees.

Qualitatively, Figure~\ref{scatter} displays the scatter plots between ground-truth latents $z$ and estimated latents $\hat{z}$ for the 2-layer, 4-variable case in both linear (a) and nonlinear (b) regimes. The emergence of clear diagonal alignments confirms high-fidelity recovery of the latent components. Notably, as highlighted by the dashed blocks, our modular-level estimation constrains the permutation indeterminacy. While traditional methods may permute variables globally, our results demonstrate that permutations occur only within their respective layers (e.g., $z_1, z_2$ and $z_3, z_4$ are grouped), thereby preserving the block-causal structure and validating our theoretical framework for structured identifiability.

\paragraph{S-RQ2: Does MoRe adapt correctly without forgetting?}
Figure~\ref{fig:sq2} reports plasticity (target MCC) and stability (forgetting) across six transition scenarios.
MoRe matches or exceeds fine-tuning plasticity on scenarios requiring no expansion (C1, C6) while achieving near-zero or negative forgetting---meaning adaptation actively improves old-factor recovery.
On expansion scenarios (C2--C5), fine-tuning recovers new factors well but at the cost of catastrophic forgetting (up to $+0.19$ on C4), whereas MoRe maintains stability by isolating adaptation to new modules.
Removing the alignment matrix (MoRe w/o $T$) preserves plasticity but consistently raises forgetting by ${\sim}0.15$ across all cases, confirming that $T$ is the primary mechanism protecting old representations during adaptation. Since scratch discards the old encoder entirely (forgetting undefined) and expansion only freezes it (forgetting zero by construction), neither is included in the stability comparison.

\paragraph{S-RQ3: Does selective adaptation enable correct factor allocation?}
Figure~\ref{fig:sq3} evaluates gate decision accuracy: whether reused modules correctly avoid absorbing new factors and new modules are correctly activated.
On simple transitions (C1, C2, C6) MoRe achieves perfect gate accuracy, correctly routing factors without parameter growth in the old subspace.
On the hard selective cases (C4$^\star$, C5$^\star$), accuracy drops to $0.5$, reflecting inherent ambiguity when old and new factors partially overlap; both sparsity penalties ($w_{\mathrm{sp}}^{\mathrm{old}}$, $w_{\mathrm{sp}}^{\mathrm{new}}$) are required---removing either one further degrades gate decisions, confirming that correct factor allocation depends on explicit regularization of both module types.

\begin{figure}[t]
    \centering
    
    \begin{subfigure}[t]{0.48\linewidth}
        \centering
        \includegraphics[width=\linewidth]{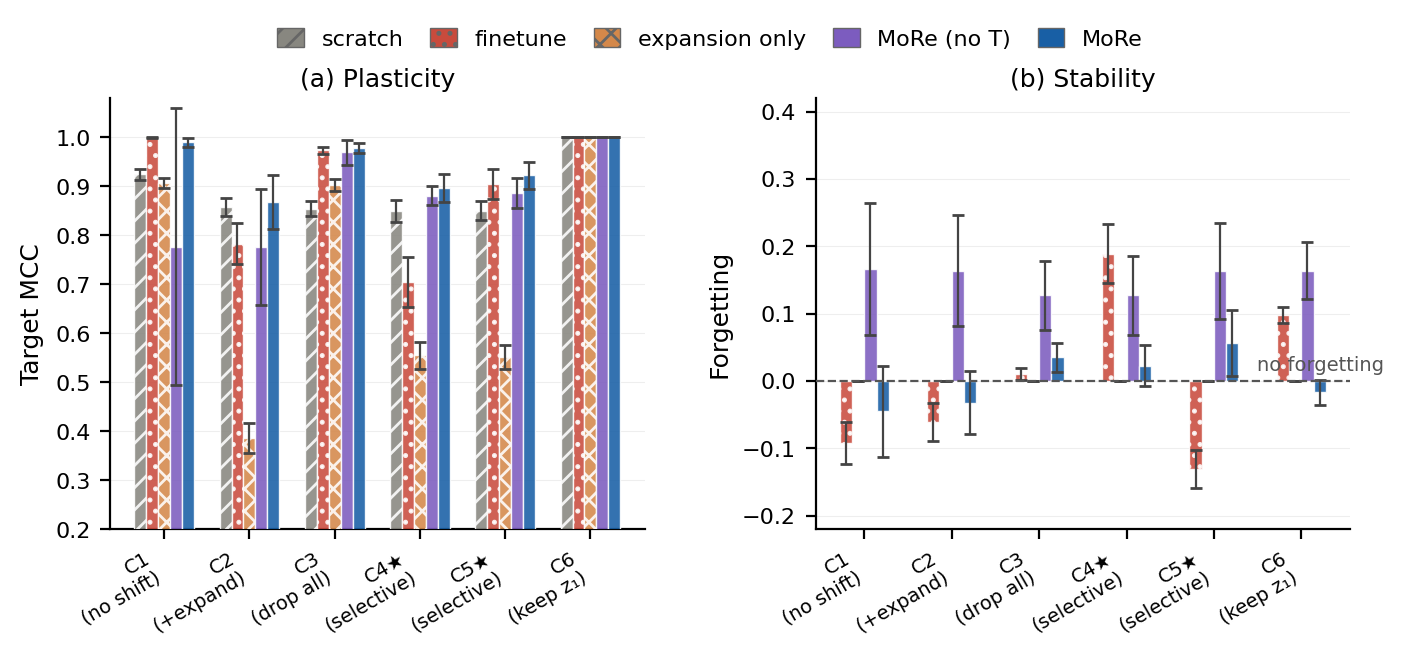}
        \caption{Plasticity and Stability}
        \label{fig:sq2}
    \end{subfigure}
    \hfill
    \begin{subfigure}[t]{0.48\linewidth}
        \centering
        \includegraphics[width=\linewidth]{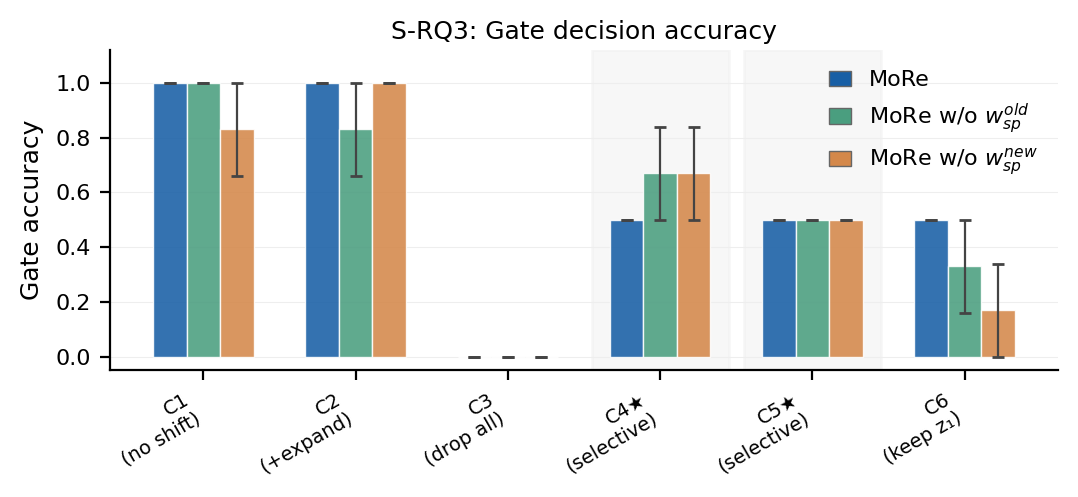}
        \caption{Gate Decision Accuracy}
        \label{fig:sq3}
    \end{subfigure}
    
    \caption{Plasticity, stability compared with baselines and gate decision accuracy}
    \vspace{-0.5cm}
    \label{fig:two_plots}
\end{figure}




\subsection{Real-world LLM Activation Experiments}

We evaluate whether the modular and hierarchical structure learned by MoRe emerges in real-world data (RW-RQ1) and whether it improves downstream prediction (RW-RQ2).
LLM activations are a natural test case: they encode rich semantic information that is potentially organized hierarchically.


\paragraph{Setup.}
We include three datasets for evaluation in this section, namely: Arxiv\footnote{\url{https://www.kaggle.com/datasets/Cornell-University/arxiv}}, AMPS\footnote{{\url{https://huggingface.co/datasets/sarahpann/AMPS}}}, and Wikipedia \footnote{\url{https://huggingface.co/datasets/wikimedia/wikipedia}}. 
The reason for choosing these three datasets is they have reference in the data for locating the hierarchical structure (we manually set the hierarchy to 4 in the real-world experiments). For arXiv, we use the metadata subject category (e.g. \texttt{math.LO}, \texttt{q-bio.PE}) directly as the hierarchy label; for AMPS, we assign difficulty tiers via keyword matching against problem text (e.g. \textit{add/multiply} → tier 0, \textit{eigenvalue/determinant} → tier 3); and for Wikipedia, we proxy specificity by article word count, with short stubs treated as general and long articles as specific.
To extract the activations from them, we use open-source LLMs including Pythia-1.4b \cite{biderman2023pythia} and Gemma-2-9b \cite{team2024gemma}.
More details of LLM activation data processing are included in the Appendix~\ref{app: real-world}.  

\paragraph{RW-RQ1: Does MoRe uncover broad-to-specific structure in real-world data?}
We evaluate alignment between MoRe's learned hierarchy and
externally defined abstraction levels, measuring \emph{concept
concentration} per layer and \emph{cross-layer temporal
dependencies} via the $B$-matrices. For each datasets, we trained 3 models to get meaningful results.

\textbf{Concept concentration.}
For each MoRe layer we compute a concentration coefficient
measuring how selectively a concept fires within one difficulty
tier versus uniformly across all tiers.
On AMPS, whose labels encode an explicit mathematical curriculum, concentration decreases monotonically from
$\mu{=}0.054$ at L0 to $\mu{=}0.039$ at L3
(Figure~\ref{fig: gini}), with L0 consistently above the
null control ($\mu{\approx}0.040$, shuffled labels).
As shown in Table~\ref{tab: hier_consistency}, the L0\,>\,L1
ordering holds across \emph{all} 18 settings and datasets,
while the mean L0$-$L2 gap is substantially larger on AMPS
than on arXiv and Wikipedia, confirming that cleaner label
hierarchies yield stronger alignment from MoRe.

\begin{table}[h]
\centering
\caption{MoRe recovered hierarchy consistency with defined labels}
\label{tab: hier_consistency}
\begin{tabular}{lccccc}
\hline
Dataset & L0>L1 & L1>L2 &L2>L3 & Mean L0--L2 & Mean L0--L3 \\
\hline
AMPS      & 6/6 & 5/6 & 2/6 &  $0.0175 \pm 0.004$ & $0.0147 \pm 0.002$ \\
Wikipedia & 6/6 & 5/6 & 1/6 & $0.0045 \pm 0.002$ & $0.0035 \pm 0.002$ \\
arXiv     & 6/6 & 2/6 & 0/6 & $0.0039 \pm 0.005$ & $-0.005 \pm 0.007$ \\
\hline
\end{tabular}
\end{table}

\begin{figure}
    \centering
    \includegraphics[width=1.0\linewidth, trim=0 0 0 1.5cm, clip]{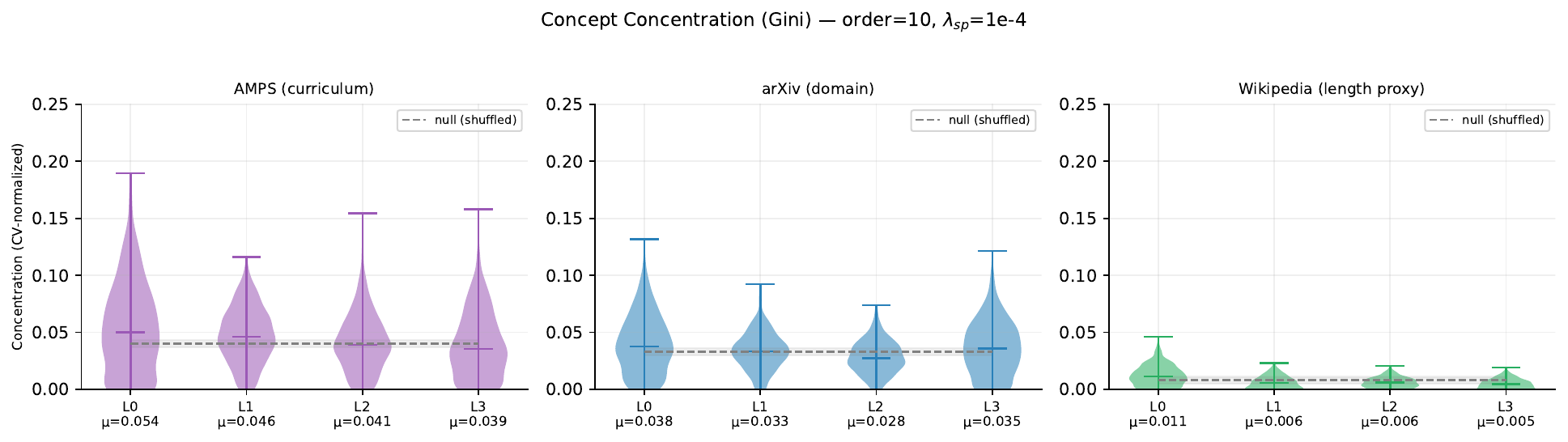}
    \vspace{-0.5cm}
    \caption{Per-layer concept concentration across three datasets}
    \label{fig: gini}
\end{figure}

\textbf{Cross-layer B-link chains.}
In the Appendix~\ref{app: cases}, we trace the labels connected by cross-layer $B$-matrix links and show curriculum-consistent
transitions on AMPS and abstract subfields to applied domains transitions from Arxiv.

\paragraph{RW-RQ2: Do MoRe representations improve label-efficient downstream prediction?} We evaluate label efficiency by training a linear probe on frozen MoRe representations to predict AMPS difficulty tiers (4 classes: arithmetic, algebra, calculus, advanced linear algebra), varying the number of labeled passages from 10 to 1{,}000.
In the few-shot regime, all single-layer MoRe representations
outperform PCA: MoRe\_L2 achieves 80.0\% at $n{=}10$ versus 77.0\%
for PCA ($+$3.9\%), using only 64 dimensions from a single hierarchy
layer compared to all 256 PCA components.
MoRe\_L2 at $n{=}50$ (84.5\%, 64 dims) matches PCA at $n{=}100$
(84.8\%, 256 dims), achieving equivalent performance with half the
labeled data and one quarter the feature dimensions.

\begin{table}[t]
\centering
\caption{Linear probe accuracy on AMPS tier prediction (MoRe).}
\label{tab:rw_rq2_amps}
\setlength{\tabcolsep}{4pt}

\begin{tabular}{lcccccc}
\toprule
{Repr.} & {Dim} &
{$n$=10} & {$n$=25} & {$n$=50} &
{$n$=100} & {$n$=1000} \\
\midrule
PCA        & 256 & 77.0$\pm$2.9 & 83.0$\pm$2.1 & 84.6$\pm$1.5
           & 84.8$\pm$0.8 & 84.9$\pm$0.6 \\
\midrule
MoRe\_all  & 256 & 79.3$\pm$4.0 & 83.1$\pm$1.1 & 84.3$\pm$1.1
           & 84.1$\pm$0.8 & 85.4$\pm$0.3 \\
MoRe\_L0   &  64 & 79.0$\pm$4.0 & 82.5$\pm$0.9 & 84.2$\pm$1.7
           & 84.5$\pm$0.4 & 86.0$\pm$0.3 \\
MoRe\_L1   &  64 & 79.4$\pm$3.6 & 82.8$\pm$1.0 & 84.4$\pm$0.9
           & 84.4$\pm$0.8 & 86.0$\pm$0.5 \\
MoRe\_L2   &  64 & \textbf{80.0}$\pm$4.1 & \textbf{83.0}$\pm$1.1
           & \textbf{84.5}$\pm$0.8 & \textbf{84.6}$\pm$1.3
           & 85.7$\pm$0.3 \\
MoRe\_L3   &  64 & 79.3$\pm$3.6 & 82.4$\pm$1.2 & 83.1$\pm$1.6
           & 83.8$\pm$0.8 & \textbf{86.1}$\pm$0.4 \\
\bottomrule
\end{tabular}
\end{table}




\paragraph{Summary of findings}
Across synthetic and real-world settings, MoRe consistently reuses previously learned structure, aligns shared components, and introduces new modules only when necessary. In controlled settings, this leads to accurate factor recovery and minimal forgetting, while in real-world data it produces interpretable hierarchical structure and improves label-efficient prediction.


%% file: tex/7_conclusions.tex
\section{Conclusion}
\label{sec:conclusion}

We argued that the central challenge of continual learning is representational: incorporating new data with minimal interference requires a representation whose components reflect intrinsic structure in the data and can be selectively updated. We instantiated this view as MoRe, a framework that learns a hierarchical modular representation with identifiability guarantees from cross-layer time-delayed dependencies, paired with a two-phase adaptation procedure that interprets new data through existing modules and then selectively reuses, aligns, or expands them. Controlled synthetic experiments verify plasticity and stability across diverse transitions, and real-world LLM activations show that hierarchical structure emerges naturally and yields label-efficient representations.

\paragraph{Broader Impact.} As foundational research, MoRe has no direct deployment path. Principled module reuse could lower the compute cost of adapting foundation models, broadening access; the interpretability of the learned hierarchy may also support auditing what a model has internalized. On the cautionary side, continually adapting systems can drift or entrench biases from skewed initial data---risks best mitigated by transparency and versioning of module libraries.

%% file: tex/8_appendix.tex
{\section*{Appendices for \emph{``MoRe: Modular Representations for Principled Continual Learning on LLMs''}}
\addcontentsline{toc}{section}{Appendices}

\startcontents[appendix]

\section*{Appendices Contents}
\printcontents[appendix]{l}{1}{}


\section{Definitions and Proofs}

\subsection{Definitions}
\begin{definition}[(Informal) Intimate Neighbor Set \cite{li2025idol},\cite{zhang2024generalsetting}] The intimate neighbor set of  $V_i \in \mathbf{V}$ contains all such $V_j\in\mathbf{V}$ that are adjacent to ${V}_i$ in the Markov network $\mathcal{M}_{\mathbf{V}}$ and also adjacent to all other neighbors of $V_i$. 
\label{def: intimate_set}
\end{definition}

\subsection{Proof for Theorem~\ref{thm1: latent variables}}
\latentvariableidentifiability*
\label{app: proof_thm1}
\begin{proof}
Assume the true model $(g,\{f_i\}_{i=1}^d, p(\boldsymbol{\epsilon}_t))$ and the estimated model $(\hat g,\{\hat f_i\}_{i=1}^d, p(\hat{\boldsymbol{\epsilon}}_t))$ induce the same distribution over the observations $\mathbf{X}_t$, and that both $g$ and $\hat g$ are invertible. Define the latent transformation
\[
h \triangleq g^{-1} \circ \hat g,
\]
so that
\[
\mathbf{Z}_t = h(\hat{\mathbf{Z}}_t).
\]
Let
\[
\mathbf{H}_t \triangleq \frac{\partial h(\hat{\mathbf{Z}}_t)}{\partial \hat{\mathbf{Z}}_t}
\]
denote the Jacobian matrix of $h$ evaluated at $\hat{\mathbf{Z}}_t$.

\paragraph{Step 1: Conditional density transformation.}
Consider the transformation $(\mathbf{X}_{t-1}, \hat{\mathbf{Z}}_t) \mapsto (\mathbf{X}_{t-1}, \mathbf{Z}_t)$. Its Jacobian is block triangular with determinant $|\det(\mathbf{H}_t)|$. Therefore,
\[
p(\hat{\mathbf{Z}}_t, \mathbf{X}_{t-1}) = p(\mathbf{Z}_t, \mathbf{X}_{t-1}) |\det(\mathbf{H}_t)|.
\]
Dividing both sides by $p(\mathbf{X}_{t-1})$ yields
\[
p(\hat{\mathbf{Z}}_t \mid \mathbf{X}_{t-1}) = p(\mathbf{Z}_t \mid \mathbf{X}_{t-1}) |\det(\mathbf{H}_t)|.
\]
Since $\mathbf{X}_{t-1} = g(\mathbf{Z}_{t-1}) = \hat g(\hat{\mathbf{Z}}_{t-1})$ and both mixing functions are invertible, we have
\[
p(\mathbf{Z}_t \mid \mathbf{X}_{t-1}) = p(\mathbf{Z}_t \mid \mathbf{Z}_{t-1}), \quad
p(\hat{\mathbf{Z}}_t \mid \mathbf{X}_{t-1}) = p(\hat{\mathbf{Z}}_t \mid \hat{\mathbf{Z}}_{t-1}).
\]
Thus,
\begin{align}
\log p(\hat{\mathbf{Z}}_t \mid \hat{\mathbf{Z}}_{t-1})
=
\log p(\mathbf{Z}_t \mid \mathbf{Z}_{t-1})
+
\log |\det(\mathbf{H}_t)|.
\label{eq:log_conditional}
\end{align}

\paragraph{Step 2: Cross-layer conditional independence.}
By the modular representation assumption, there are no instantaneous dependencies across different layers. Hence for any $b \neq b'$ and any $k,l$,
\[
\frac{\partial^2}{\partial \hat{Z}^b_{t,k} \, \partial \hat{Z}^{b'}_{t,l}}
\log p(\hat{\mathbf{Z}}_t \mid \hat{\mathbf{Z}}_{t-1}) = 0.
\]
Applying this to Eq.~\eqref{eq:log_conditional}, we obtain
\begin{align}
\frac{\partial^2 \log p(\mathbf{Z}_t \mid \mathbf{Z}_{t-1})}{\partial \hat{Z}^b_{t,k} \partial \hat{Z}^{b'}_{t,l}}
+
\frac{\partial^2 \log |\det(\mathbf{H}_t)|}{\partial \hat{Z}^b_{t,k} \partial \hat{Z}^{b'}_{t,l}}
= 0.
\label{eq:second_zero}
\end{align}

\paragraph{Step 3: Chain rule expansion.}
Using the chain rule,
\begin{align}
\frac{\partial^2 \log p(\mathbf{Z}_t \mid \mathbf{Z}_{t-1})}{\partial \hat{Z}^b_{t,k} \partial \hat{Z}^{b'}_{t,l}}
&=
\sum_{i,a}
\frac{\partial \log p(\mathbf{Z}_t \mid \mathbf{Z}_{t-1})}{\partial Z^a_{t,i}}
\frac{\partial^2 Z^a_{t,i}}{\partial \hat{Z}^b_{t,k} \partial \hat{Z}^{b'}_{t,l}} \\
&\quad +
\sum_{i,a} \sum_{j,a'}
\frac{\partial^2 \log p(\mathbf{Z}_t \mid \mathbf{Z}_{t-1})}{\partial Z^a_{t,i} \partial Z^{a'}_{t,j}}
\frac{\partial Z^a_{t,i}}{\partial \hat{Z}^b_{t,k}}
\frac{\partial Z^{a'}_{t,j}}{\partial \hat{Z}^{b'}_{t,l}}.
\end{align}

By modularity, cross-layer second derivatives vanish for $a \neq a'$, hence only terms with $a = a'$ remain. Rewriting the second sum symmetrically over $i<j$, we obtain
\begin{align}
&\sum_{i,a}
\frac{\partial \log p(\mathbf{Z}_t \mid \mathbf{Z}_{t-1})}{\partial Z^a_{t,i}}
\frac{\partial^2 Z^a_{t,i}}{\partial \hat{Z}^b_{t,k} \partial \hat{Z}^{b'}_{t,l}} \\
&+
\sum_{i,a}
\frac{\partial^2 \log p(\mathbf{Z}_t \mid \mathbf{Z}_{t-1})}{\partial (Z^a_{t,i})^2}
\frac{\partial Z^a_{t,i}}{\partial \hat{Z}^b_{t,k}}
\frac{\partial Z^a_{t,i}}{\partial \hat{Z}^{b'}_{t,l}} \\
&+
\sum_{i<j,a}
\frac{\partial^2 \log p(\mathbf{Z}_t \mid \mathbf{Z}_{t-1})}{\partial Z^a_{t,i} \partial Z^a_{t,j}}
\Big(
\frac{\partial Z^a_{t,i}}{\partial \hat{Z}^b_{t,k}}
\frac{\partial Z^a_{t,j}}{\partial \hat{Z}^{b'}_{t,l}}
+
\frac{\partial Z^a_{t,j}}{\partial \hat{Z}^b_{t,k}}
\frac{\partial Z^a_{t,i}}{\partial \hat{Z}^{b'}_{t,l}}
\Big).
\label{eq:expanded}
\end{align}

\paragraph{Step 4: Differentiation w.r.t. lagged variables.}
Since $\mathbf{H}_t$ depends only on $\hat{\mathbf{Z}}_t$, we have
\[
\frac{\partial}{\partial Z^{l'}_{t-1,o}} \log |\det(\mathbf{H}_t)| = 0.
\]
Differentiating Eq.~\eqref{eq:second_zero} with respect to $Z^{l'}_{t-1,o}$ eliminates the Jacobian term, yielding
\begin{align}
\text{(linear combination of } \mathbf{W}_{t,i}^a, \mathbf{U}_{t,i}^a, \mathbf{V}_{t,i,j}^a ) \equiv 0.
\label{eq:linear_combo}
\end{align}

\paragraph{Step 5: Apply sufficient variability (C2).}
By Condition C2, the collection of vectors
\[
\{\mathbf{W}_{t,i}^a, \mathbf{U}_{t,i}^a, \mathbf{V}_{t,i,j}^a\}
\]
is linearly independent. Therefore, all coefficients in Eq.~\eqref{eq:linear_combo} must vanish. This implies:
\begin{align*}
\frac{\partial^2 Z^a_{t,i}}{\partial \hat{Z}^b_{t,k} \partial \hat{Z}^{b'}_{t,l}} &= 0, \\
\frac{\partial Z^a_{t,i}}{\partial \hat{Z}^b_{t,k}}
\frac{\partial Z^a_{t,i}}{\partial \hat{Z}^{b'}_{t,l}} &= 0, \\
\frac{\partial Z^a_{t,i}}{\partial \hat{Z}^b_{t,k}}
\frac{\partial Z^a_{t,j}}{\partial \hat{Z}^{b'}_{t,l}}
+
\frac{\partial Z^a_{t,j}}{\partial \hat{Z}^b_{t,k}}
\frac{\partial Z^a_{t,i}}{\partial \hat{Z}^{b'}_{t,l}} &= 0,
\end{align*}
for all $(i,j)$ with $i<j$.

\paragraph{Step 6: Conclusion.}
These constraints imply that each ground-truth latent variable depends only on estimated latent variables from a single layer, and no mixing across layers is possible. 

The remaining ambiguity within each layer is resolved by existing results (e.g., Theorem 1 and Theorem 3 in \cite{li2025idol}), which show that under C3, the latent variables are component-wise identifiable. In the absence of instantaneous dependencies, the result reduces to the setting of \cite{yao2022tdrl}.
\end{proof}

\subsection{Proof for Theorem~\ref{thm2: latent hierarchy}}
\hierarchyidentifiability*
\begin{proof}
By Theorem~\ref{thm1: latent variables}, any admissible transformation between the true latent variables and the estimated latent variables is block-wise with respect to layers. Therefore, estimated latent variables from one layer can correspond only to true latent variables from one layer, up to a permutation of layers and possible within-layer transformations.

It remains to determine which permutations of the layers are admissible. Let $\hat\pi$ be an estimated hierarchical order that induces the same observational distribution as the true data-generating process. Suppose, for contradiction, that $\hat\pi$ is not compatible with the true layer-level time-delayed graph $\mathcal{G}_{\mathrm{lag}}$. Then there exists an edge
\[
(a,b)\in\mathcal{E}_{\mathrm{lag}}
\]
such that
\[
\hat\pi^{-1}(a)>\hat\pi^{-1}(b).
\]
In the true process, the edge $(a,b)$ means that the past layer $\mathbf{Z}_{t-1}^a$ has a nonzero time-delayed effect on the current layer $\mathbf{Z}_t^b$.

However, under the estimated order $\hat\pi$, layer $a$ is placed after layer $b$. By the hierarchical mask constraint C4, the estimated model forbids any time-delayed effect from layer $a$ to layer $b$. Therefore, the estimated model cannot represent the true time-delayed dependence from $\mathbf{Z}_{t-1}^a$ to $\mathbf{Z}_t^b$.

This contradicts the assumption that the estimated model induces the same observational distribution as the true model. Hence no such violating edge can exist, and every admissible estimated order $\hat\pi$ must satisfy
\[
(a,b)\in\mathcal{E}_{\mathrm{lag}}
\quad\Longrightarrow\quad
\hat\pi^{-1}(a)<\hat\pi^{-1}(b).
\]
Thus, $\hat\pi$ is a compatible order of $\mathcal{G}_{\mathrm{lag}}$.

Therefore, the hierarchy is identifiable up to the set of compatible orders
\[
\operatorname{Top}(\mathcal{G}_{\mathrm{lag}}).
\]

If C5 holds, then $\mathcal{G}_{\mathrm{lag}}$ has a unique compatible order. Hence the estimated order must equal this unique order, and the hierarchical order is identifiable.
\end{proof}

\subsection{Proof for Corollary~\ref{thm3: latent process}}
\processidentifiability*

\begin{proof}
By Theorem~\ref{thm1: latent variables}, no admissible transformation can mix latent variables across different layers. Hence the layer partition is identifiable up to a permutation of layers.

By the assumed within-layer component-wise identifiability, the remaining ambiguity inside each layer is removed up to the standard component-wise indeterminacies. Therefore, the latent variables are identifiable layer by layer.

By Theorem~\ref{thm2: latent hierarchy}, the layer ordering is identifiable up to compatible order. Hence the latent process, including the layer partition, the component-wise latent variables within each layer, and the layer-level time-delayed structure, is identifiable up to compatible hierarchical order.

If C5 holds, the compatible hierarchical order is unique. Therefore, the entire modular latent process is identifiable up to the remaining component-wise indeterminacies inherited from the within-layer identifiability result.
\end{proof}

\subsection{Discussion of the Conditions}
\label{app1: discussion_conditions}

\section{Implementation Details}
\subsection{Layer-wise Estimation}
\label{app: layer-wise estimation}

We give the implementation details for the progressive estimator used when the joint objective in Eq.~(3) is too expensive to optimize. Throughout this subsection, let
\(x_t := X^{\mathrm{proj}}_t\) denote the projected observation used by the encoder. We index layers in the order in which they are estimated: layer \(1\) is the first, most fundamental module, and layer \(i\) is estimated after layers \(1,\ldots,i-1\) have been recovered. This notation matches the estimation order; if the generative-model section uses the reverse topological convention, the indices here should be interpreted after topological sorting of the layer-level time-delayed graph.

\paragraph{Principle.}
The layer-wise estimator follows directly from the hierarchical restriction in Eq.~(2). Once \(\hat Z^{<i}_t := (\hat Z^1_t,\ldots,\hat Z^{i-1}_t)\) has been learned, these earlier modules are treated as anchors. The next encoder \(f_i\) is trained so that \(\hat Z^i_t=f_i(x_t)\) is (i) informative about the part of \(x_t\) not yet explained by previous modules, (ii) predictive from the allowed delayed parents \((\hat Z^i_{t-1},\hat Z^{<i}_{t-1})\), and (iii) conditionally independent of the raw past observation once these allowed parents are known. The last condition prevents the new layer from hiding information that should have been captured by the already learned layers.

\paragraph{First-layer objective.}
For the first layer, define \(\hat Z^1_t=f_1(x_t)\). We use two conditional density models with matched architecture and noise prior,
\begin{align}
q^0_1(\hat z^1_t \mid \hat z^1_{t-1}),
\qquad
q^1_1(\hat z^1_t \mid \hat z^1_{t-1}, x_{t-1}),
\end{align}
where \(q^0_1\) is the admissible first-layer temporal model and \(q^1_1\) is an augmented model that also sees the raw past. The conditional-independence requirement
\begin{align}
\hat Z^1_t \perp\!\!\!\perp X_{t-1} \mid \hat Z^1_{t-1}
\end{align}
is enforced by the density-ratio estimate
\begin{align}
\mathcal L^1_{\mathrm{cmi}}
=
\mathbb E_t\!
\left[
\log q^1_1(\hat Z^1_t \mid \hat Z^1_{t-1},x_{t-1})
-
\log q^0_1(\hat Z^1_t \mid \hat Z^1_{t-1})
\right].
\label{eq:first-layer-cmi}
\end{align}
Minimizing Eq.~\eqref{eq:first-layer-cmi} discourages any additional predictive information in \(x_{t-1}\) beyond \(\hat Z^1_{t-1}\). To avoid collapse, we combine this term with an informativeness loss and a temporal prediction loss. In our implementation, the informativeness term is a reconstruction loss
\begin{align}
\mathcal L^1_{\mathrm{rec}}
=
\mathbb E_t\left[\lVert x_t-\tilde g_1(\hat Z^1_t)\rVert_2^2\right],
\end{align}
and the prediction term is a JEPA-style loss with an exponential-moving-average target encoder \(\bar f_1\),
\begin{align}
\mathcal L^1_{\mathrm{pred}}
=
\mathbb E_t\left[\lVert r_1(\hat Z^1_{t-1})-\operatorname{sg}(\bar f_1(x_t))\rVert_2^2\right],
\end{align}
where \(r_1\) is a latent predictor and \(\operatorname{sg}(\cdot)\) denotes stop-gradient. A variance floor
\begin{align}
\mathcal L^1_{\mathrm{var}}
=
\sum_{k}\max\{0,\rho-\operatorname{Std}(\hat Z^1_{t,k})\}
\end{align}
is optionally used to rule out degenerate constant representations.

\paragraph{Conditional-density instantiation.}
The density models \(q^0_i\) and \(q^1_i\) can be implemented either as conditional normalizing flows or as Gaussian conditional density heads. For a flow implementation, with context \(c^0_t\), we write
\begin{align}
e_t = T_{\phi}(\hat z^i_t;c^0_t),
\qquad
\log q_{\phi}(\hat z^i_t\mid c^0_t)
=
\log p_E(e_t)+
\log\left\lvert\det \frac{\partial T_{\phi}(\hat z^i_t;c^0_t)}{\partial \hat z^i_t}\right\rvert .
\label{eq:conditional-flow-density}
\end{align}
For the Gaussian-head implementation, \(q_{\phi}(\hat z\mid c)=\mathcal N(\mu_{\phi}(c),\operatorname{diag}\sigma^2_{\phi}(c))\). In both cases, \(q^0_i\) and \(q^1_i\) must share the same density family and noise prior; otherwise, the difference \(\log q^1_i-\log q^0_i\) could reflect extra flexibility rather than additional information in the raw past.

\paragraph{Recursive estimation of later layers.}
After estimating layers \(<i\), we freeze \(f_1,\ldots,f_{i-1}\) and learn only \(f_i\), its decoder head, its predictor, and its density models. Define the admissible context
\begin{align}
c^0_{i,t}:=(\hat Z^i_{t-1},\hat Z^{<i}_{t-1}),
\end{align}
which contains the previous value of the new layer and the delayed values of all already recovered modules. The augmented context is
\begin{align}
c^1_{i,t}:=(\hat Z^i_{t-1},\hat Z^{<i}_{t-1},x_{t-1}).
\end{align}
The layer-\(i\) CMI penalty is then
\begin{align}
\mathcal L^i_{\mathrm{cmi}}
=
\mathbb E_t\!
\left[
\log q^1_i(\hat Z^i_t\mid c^1_{i,t})-
\log q^0_i(\hat Z^i_t\mid c^0_{i,t})
\right],
\label{eq:layer-i-cmi}
\end{align}
which estimates \(I(\hat Z^i_t;X_{t-1}\mid \hat Z^i_{t-1},\hat Z^{<i}_{t-1})\). The reconstruction term is applied to the full representation recovered so far,
\begin{align}
\mathcal L^i_{\mathrm{rec}}
=
\mathbb E_t\left[\lVert x_t-\tilde g_i(\hat Z^{\le i}_t)\rVert_2^2\right],
\qquad
\hat Z^{\le i}_t=(\hat Z^{<i}_t,\hat Z^i_t),
\end{align}
with previous encoders frozen. This allows the decoder to change while preventing later layers from overwriting earlier modules. When desired, one can further residualize the input by replacing \(x_t\) with \(x_t-\tilde g_{i-1}(\hat Z^{<i}_t)\), but our experiments use the full-observation decoder for stability.

The complete layer-\(i\) encoder objective is
\begin{align}
\mathcal L_i
=&\;\lambda_{\mathrm{rec}}\mathcal L^i_{\mathrm{rec}}
+\lambda_{\mathrm{pred}}\mathcal L^i_{\mathrm{pred}}
+\lambda_{\mathrm{cmi}}\mathcal L^i_{\mathrm{cmi}}
+\lambda_{0}\mathcal L^i_{0}
+\lambda_{1}\mathcal L^i_{1}
+\lambda_{\mathrm{var}}\mathcal L^i_{\mathrm{var}}
+\lambda_{\mathrm{orth}}\mathcal L^i_{\mathrm{orth}},
\label{eq:layerwise-objective}
\end{align}
where
\begin{align}
\mathcal L^i_0=-\mathbb E_t\log q^0_i(\hat Z^i_t\mid c^0_{i,t}),
\qquad
\mathcal L^i_1=-\mathbb E_t\log q^1_i(\hat Z^i_t\mid c^1_{i,t}).
\end{align}
The optional decorrelation term
\begin{align}
\mathcal L^i_{\mathrm{orth}}
=\sum_{j<i}\left\lVert \operatorname{Cov}(\hat Z^i_t,\hat Z^j_t)\right\rVert_F^2
\end{align}
helps avoid relearning an already recovered module, but it is not required by the theory because the main hierarchy constraint is imposed through the delayed contexts in Eq.~\eqref{eq:layer-i-cmi}.

\paragraph{Optimization schedule.}
For each layer, we alternate two updates. First, with encoder outputs detached, we fit the density estimators by minimizing \(\mathcal L^i_0+\mathcal L^i_1\). Second, with the density estimators fixed, we update \(f_i\) using Eq.~\eqref{eq:layerwise-objective}. In practice, we warm up the encoder with \(\mathcal L_{\mathrm{pred}}+\mathcal L_{\mathrm{rec}}\) before enabling the CMI penalty. This warm-up prevents early density-estimation noise from collapsing the representation. After warm-up, the prediction loss can be hinge-thresholded at the warm-up value so that the CMI term shapes the representation without sacrificing temporal predictability.

\paragraph{Layer-wise estimator.}
Algorithm~\ref{alg:layerwise-estimation} summarizes the recursive procedure.
\begin{algorithm}[H]
\caption{Layer-wise estimation on the initial dataset}
\label{alg:layerwise-estimation}
\begin{algorithmic}[1]
\Require Initial dataset \(D_0=\{x_t\}_{t=1}^T\), number of layers \(d\), layer dimensions \(m_i\), lag \(L\), loss weights \(\lambda_{\cdot}\)
\State Fit the projection basis \((V_0,\mu_0)\) and set \(x_t=(X_t-\mu_0)V_0\)
\State Initialize the learned module list \(\mathcal M\leftarrow\emptyset\)
\For{\(i=1,\ldots,d\)}
    \State Freeze all encoders in \(\mathcal M\); compute \(\hat Z^{<i}_t\) for all \(t\)
    \State Initialize \(f_i\), decoder \(\tilde g_i\), predictor \(r_i\), target encoder \(\bar f_i\), and density models \((q^0_i,q^1_i)\)
    \For{training epochs}
        \State Form \(\hat Z^i_t=f_i(x_t)\), admissible context \(c^0_{i,t}\), and augmented context \(c^1_{i,t}\)
        \State Update \((q^0_i,q^1_i)\) with \(\hat Z^i_t\) detached by minimizing \(\mathcal L^i_0+\mathcal L^i_1\)
        \State Update \(f_i,\tilde g_i,r_i\) by minimizing Eq.~\eqref{eq:layerwise-objective}
        \State Update the target encoder \(\bar f_i\leftarrow \tau\bar f_i+(1-\tau)f_i\)
    \EndFor
    \State Add \(f_i\) to \(\mathcal M\) and estimate the new transition block from \((\hat Z^{\le i}_{t-1},\hat Z^{\le i}_{t})\)
\EndFor
\State \Return Encoder \(f=(f_1,\ldots,f_d)\), decoder \(\tilde g_d\), transition estimate \(J_B\), and projection basis \((V_0,\mu_0)\)
\end{algorithmic}
\end{algorithm}

\section{Adaptation Algorithm Details}
\label{app: adaptation-details}

We provide the implementation details of the adaptation procedure introduced in Section~\ref{sec:adaptation}. Our experiments use a linear instantiation: $f_{k-1}$ is parameterized by $\mathbf{F}_{k-1} \in \mathbb{R}^{d_{k-1} \times m_{k-1}}$ and the alignment $T$ by $\mathbf{T} \in \mathbb{R}^{m_{k-1} \times m_{k-1}}$.

\subsection{Self-Consistency Score}
\label{app:scoring}

Given the inference-only forward pass $\mathbf{Z}_{t-1} = \mathbf{X}^{\mathrm{proj}}_{t-1}\mathbf{F}_{k-1}$, $\mathbf{Z}_t = \mathbf{X}^{\mathrm{proj}}_t\mathbf{F}_{k-1}$, and residuals $\mathbf{E}_t = \mathbf{Z}_t - \mathbf{Z}_{t-1}\mathbf{J}_B^{(k-1),\top}$, define the correlation matrices
\begin{equation}
\mathbf{H}_1[i,j] = \operatorname{corr}(E_{i,t}, Z_{j,t-1}), \qquad
\mathbf{H}_3[i,j] = \operatorname{corr}(Z_{i,t-1}, Z_{j,t}).
\end{equation}
The self-consistency score is
\begin{equation}
a_j \;=\; \big|\mathbf{H}_3[j,j]\big| \cdot \Big(1 - \tfrac{1}{m-1}\!\sum_{i \neq j}\big|\mathbf{H}_1[i,j]\big|\Big), \qquad a_j \in [0, 1].
\end{equation}
Candidate new modules are assigned $a_j = 0$.

\subsection{Coordinate Expansion}
\label{app:expansion}

The residual $\mathbf{R} = \mathbf{X}_k - (\mathbf{X}^{\mathrm{proj}}\mathbf{V}_{k-1}^\top + \boldsymbol{\mu}_{k-1})$ is centered, and its top $k_{\mathrm{extra}}$ right singular vectors are orthogonalized against $\mathbf{V}_{k-1}$ (Gram-Schmidt), then orthonormalized via thin QR, yielding $\mathbf{V}^{\mathrm{new}} \in \mathbb{R}^{n \times k_{\mathrm{extra}}}$ with $\mathbf{V}^{\mathrm{new}\top}\mathbf{V}_{k-1} = \mathbf{0}$. The transition is extended as
\begin{equation}
\mathbf{J}_B^{\mathrm{ext}} =
\begin{bmatrix}
\mathbf{J}_B^{(k-1)} & \mathbf{0} \\
\mathbf{J}_B^{\mathrm{old}\to\mathrm{new}} & \mathbf{J}_B^{\mathrm{new}\to\mathrm{new}}
\end{bmatrix},
\end{equation}
with the zero block enforcing the layer-wise hierarchy mask of Theorem~\ref{thm2: latent hierarchy}.

\subsection{Hard Concrete Gates}
\label{app:gate-init}

Each gate $g_j \in [0,1]$ has logit $\ell_j$ and is sampled during training as
\begin{equation}
g_j = \mathrm{clip}_{[0,1]}\!\Big(\bar{s}\Big), \quad
\bar{s} = s(\zeta - \gamma) + \gamma, \quad
s = \sigma\!\big(\tfrac{\log u - \log(1-u) + \ell_j}{\tau}\big), \quad
u \sim \mathrm{Uniform}(0,1),
\end{equation}
with $\zeta = 1.1$, $\gamma = -0.1$, and $\tau$ annealed from $\tau_{\mathrm{start}}$ to $\tau_{\mathrm{end}}$. At evaluation, $g_j = \mathbb{1}[\sigma(\ell_j) > 0.5]$. The expected $L_0$ regularizer is
\begin{equation}
\mathcal{L}_0(\mathbf{g}) = \sum_j \sigma\big(\ell_j - \tau \log(-\gamma/\zeta)\big).
\end{equation}

Logits for existing modules are initialized from $\mathbf{a}$ as
\begin{equation}
\ell_j = C \cdot \big(2\,\mathrm{softmax}_T(\mathbf{a})_j - 1\big) \cdot \mathrm{clip}(\bar{a}/\eta, 0, 1), \quad \bar{a} = \tfrac{1}{m_{k-1}}\textstyle\sum_j a_j,
\end{equation}
with hyperparameters $C, T, \eta$. Logits for candidate new modules are initialized at $0$.

\subsection{Loss Function}
\label{app:loss}

In the linear instantiation, the data loss is
\begin{equation}
\mathcal{L}_{\mathrm{data}} = \|\mathbf{E}_t\|_1 - \tfrac{1}{2}\log\det\!\big(\mathbf{F}^{\mathrm{active},\top}\mathbf{F}^{\mathrm{active}} + \varepsilon\mathbf{I}\big),
\end{equation}
where $\mathbf{F}^{\mathrm{active}} = [\mathbf{F}_{k-1}\mathbf{T} \cdot \mathrm{diag}(\mathbf{g}^{\mathrm{old}}),\; \mathbf{F}^{\mathrm{new}}]$. The alignment regularizer is $\mathcal{R}(\mathbf{T}) = \|\mathbf{T} - \mathbf{I}\|_F^2$. The full objective is
\begin{equation}
\mathcal{L}_{\mathrm{adapt}}
= \mathcal{L}_{\mathrm{data}}
+ \lambda_{\mathrm{align}} \mathcal{R}(\mathbf{T})
+ \lambda_{\mathrm{sp}}^{\mathrm{old}} \mathcal{L}_0(\mathbf{g}^{\mathrm{old}})
+ \lambda_{\mathrm{sp}}^{\mathrm{new}} \mathcal{L}_0(\mathbf{g}^{\mathrm{new}})
+ \lambda_J \big(\|\mathbf{J}_B^{\mathrm{old}\to\mathrm{new}}\|_1 + \|\mathbf{J}_B^{\mathrm{new}\to\mathrm{new}}\|_1\big).
\end{equation}

\subsection{Algorithm}
\label{app:algorithm}

\begin{algorithm}[H]
\caption{MoRe Continual Adaptation at Stage $k$}
\label{alg:adaptation}
\begin{algorithmic}[1]
\Require $\mathcal{D}_k$; previous $\mathbf{F}_{k-1}, \mathbf{V}_{k-1}, \boldsymbol{\mu}_{k-1}, \mathbf{J}_B^{(k-1)}$
\Require Hyperparameters $\{\lambda_\cdot\}$, $C, T, \eta$, warmup fraction $r$, epochs $N$, expansion budget $k_{\mathrm{extra}}$
\Statex \textbf{// Phase 1: Fitting and Assessment}
\State $\mathbf{X}^{\mathrm{proj}} \gets (\mathbf{X}_k - \boldsymbol{\mu}_{k-1})\mathbf{V}_{k-1}$
\State Compute $\mathbf{Z}_{t-1}, \mathbf{Z}_t, \mathbf{E}_t, \mathbf{H}_1, \mathbf{H}_3$ \Comment{inference only}
\State $a_j \gets |\mathbf{H}_3[j,j]| \cdot \big(1 - \tfrac{1}{m-1}\sum_{i\neq j}|\mathbf{H}_1[i,j]|\big)$
\State $\mathbf{V}^{\mathrm{new}} \gets$ orthogonalized top-$k_{\mathrm{extra}}$ singular vectors of residual $\mathbf{R}$
\State Allocate $\mathbf{F}^{\mathrm{new}}$, $\mathbf{J}_B^{\mathrm{old}\to\mathrm{new}}$, $\mathbf{J}_B^{\mathrm{new}\to\mathrm{new}}$, $\mathbf{T} \gets \mathbf{I}$
\State Initialize $\boldsymbol{\ell}^{\mathrm{old}}$ from $\mathbf{a}$; $\boldsymbol{\ell}^{\mathrm{new}} \gets \mathbf{0}$
\Statex \textbf{// Phase 2: Selective Updating}
\For{$\text{epoch} = 1$ \textbf{to} $N$}
    \State Anneal $\tau$; sample $\mathbf{g}^{\mathrm{old}}, \mathbf{g}^{\mathrm{new}}$ via Hard Concrete
    \State Compute $\mathcal{L}_{\mathrm{adapt}}$; update $\mathbf{T}, \mathbf{F}^{\mathrm{new}}, \mathbf{J}_B^{\mathrm{old}\to\mathrm{new}}, \mathbf{J}_B^{\mathrm{new}\to\mathrm{new}}$
    \If{$\text{epoch} > r \cdot N$}
        \State Update gate logits $\boldsymbol{\ell}^{\mathrm{old}}, \boldsymbol{\ell}^{\mathrm{new}}$
    \EndIf
\EndFor
\Statex \textbf{// Commit}
\State Binarize: $g_j \gets \mathbb{1}[\sigma(\ell_j) > 0.5]$
\State $\mathbf{V}_k \gets [\mathbf{V}_{k-1}, \mathbf{V}^{\mathrm{new}}[:, \mathbf{g}^{\mathrm{new}}=1]]$
\State $\mathbf{F}_k \gets [\mathbf{F}_{k-1}\mathbf{T},\; \mathbf{F}^{\mathrm{new}}[:, \mathbf{g}^{\mathrm{new}}=1]]$
\State Assemble $\mathbf{J}_B^{(k)}$ from $\mathbf{J}_B^{(k-1)}$ and selected new blocks
\State Store $\mathbf{g}^{(k)}$
\Ensure $(\mathbf{V}_k, \boldsymbol{\mu}_{k-1}, \mathbf{F}_k, \mathbf{J}_B^{(k)}, \mathbf{g}^{(k)})$
\end{algorithmic}
\end{algorithm}

\section{Limitations}
\label{sec:limitations}

Our experiments are designed primarily as concept verification rather than benchmark competition. The synthetic settings, while controlled and informative for isolating mechanism, cover a limited range of transitions, dimensionalities, and noise structures; the real-world LLM activation experiments are similarly scoped to demonstrate the emergence of hierarchical structure rather than to claim downstream gains across diverse tasks. The implementation evaluated here uses a linear instantiation of the framework, which is principled but restrictive: scaling to nonlinear encoders, longer adaptation chains, and richer real-world distribution shifts is left to future work. Several theoretical assumptions---unidirectional cross-layer time-delayed structure, sufficient variability, and within-layer sparsity---may be only approximately satisfied in practice, and a systematic study of robustness to such violations remains open. Finally, our method addresses one adaptation step at a time; long-horizon stability across many sequential stages, and the design of memory mechanisms for reactivating long-dormant modules, are natural next steps toward a complete continual learning system.

\section{Experimental Details}

\subsection{Synthetic Experiments}
\label{app: synthetic}
\subsubsection{Layer-wise Estimation Results}


We focus especially on first-layer estimation because, in the layer-wise estimator, accurate recovery of the initial layer provides the anchor for estimating subsequent layers. Once an earlier layer has been recovered, the same estimation principle can be applied recursively to later layers by conditioning on the already learned lower-level structure. The experiments below therefore test whether the learned scalar representation preferentially aligns with ground-truth first-layer coordinates, and whether this behavior persists when the generator becomes nonlinear and when the hierarchy is extended to three layers.

We next visualize first-layer estimation on synthetic temporal data. In all panels, the horizontal axis is the learned scalar first-layer representation, while the vertical axis is a ground-truth latent coordinate from the synthetic generator. Because recovered latent coordinates are identifiable only up to the usual component-wise indeterminacies, and up to sign and scaling in the scalar linear diagnostics considered here, we report absolute correlations; a strong negative correlation is therefore as meaningful as a strong positive one. The main diagnostic question is whether the learned representation aligns more strongly with first-layer latents than with deeper-layer latents.

Figure~\ref{fig:synth_linear_nongaussian} shows the two-layer linear non-Gaussian case. The learned representation exhibits a strong monotone relationship with the true first-layer latent and a substantially weaker relationship with the second-layer latent. Quantitatively, the final absolute correlation with the first-layer latent is about $0.90$, versus about $0.41$ for the second-layer latent. This is the expected qualitative pattern for successful first-layer recovery: the representation tracks the first layer closely while remaining relatively insensitive to the other latent coordinate.

\begin{figure}[t]
    \centering
    \includegraphics[width=\linewidth]{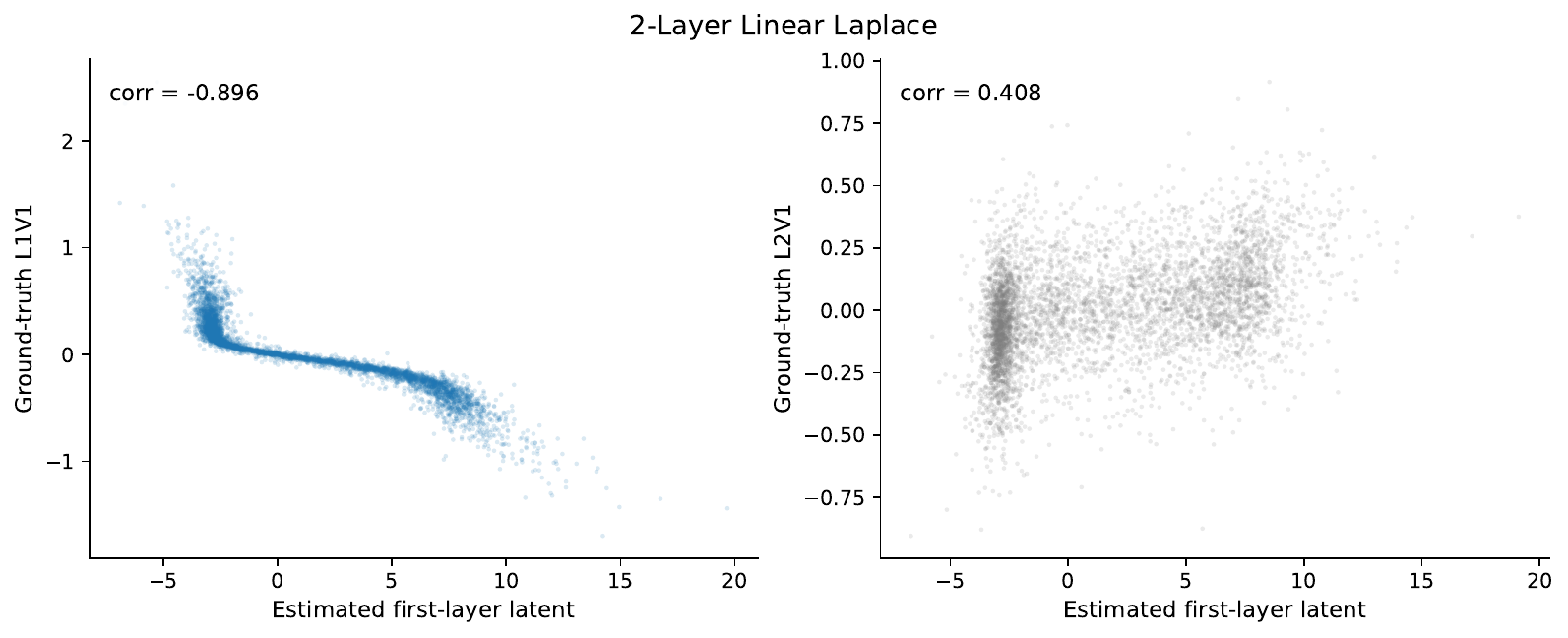}
    \caption{Two-layer linear non-Gaussian synthetic experiment. Each panel plots the learned scalar first-layer representation against one ground-truth latent coordinate. The left panel shows strong alignment with the first-layer latent, while the right panel shows a noticeably weaker relationship with the second-layer latent.}
    \label{fig:synth_linear_nongaussian}
\end{figure}

Figure~\ref{fig:synth_nonlinear} reports the corresponding two-layer nonlinear setting. The same qualitative pattern persists, but the first-layer alignment becomes even stronger: the final absolute correlation with the first-layer latent is about $0.97$, while the second-layer correlation remains around $0.42$. This indicates that the method is not restricted to the linear case and can still isolate the first layer when the latent dynamics and/or observation mapping become nonlinear.

\begin{figure}[t]
    \centering
    \includegraphics[width=\linewidth]{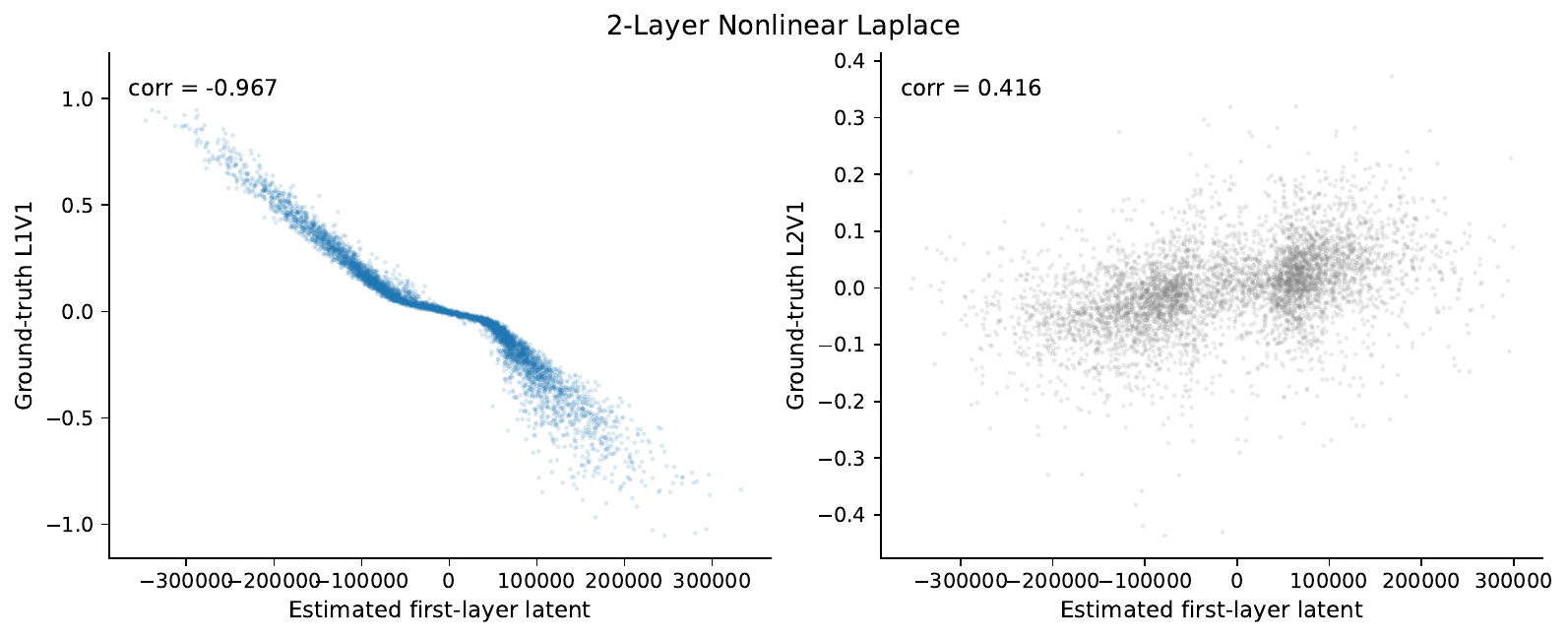}
    \caption{Two-layer nonlinear synthetic experiment. The learned representation remains strongly aligned with the first-layer latent and only weakly aligned with the second-layer latent, showing that the first-layer recovery pattern persists beyond the linear setting.}
    \label{fig:synth_nonlinear}
\end{figure}

For the more challenging three-layer setting with two latent variables per layer, Figures~\ref{fig:synth_three_layer_focus} and~\ref{fig:synth_three_layer_grid} provide two complementary views. Figure~\ref{fig:synth_three_layer_focus} concentrates on the two first-layer coordinates. One coordinate is recovered very strongly ($|corr| \approx 0.94$), while the other is recovered more moderately ($|corr| \approx 0.49$). This is consistent with the current estimator being one-dimensional: it captures one dominant direction in the first-layer subspace, but it does not reconstruct the full two-dimensional first-layer subspace.

\begin{figure}[t]
    \centering
    \includegraphics[width=\linewidth]{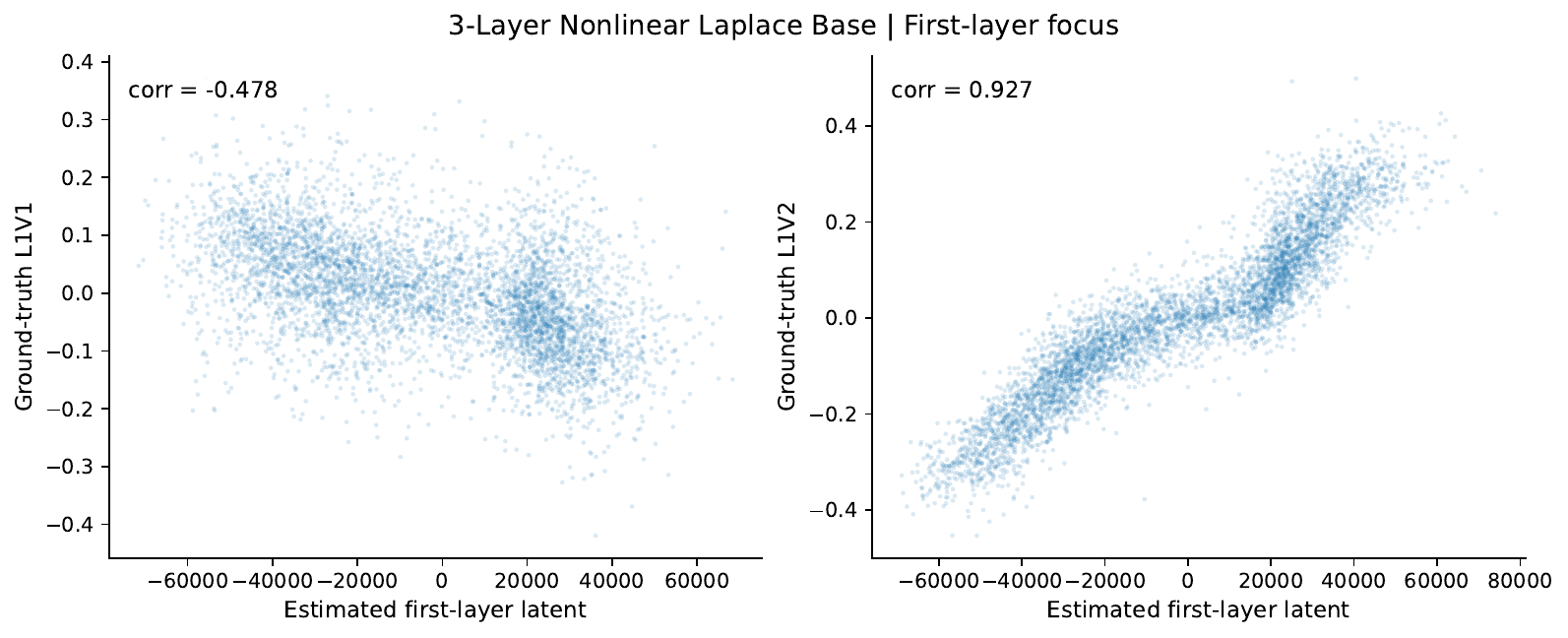}
    \caption{Three-layer synthetic experiment, first-layer focus. The learned scalar representation is plotted against the two first-layer ground-truth coordinates. The estimator strongly captures one dominant first-layer direction, while the second first-layer coordinate is represented more weakly.}
    \label{fig:synth_three_layer_focus}
\end{figure}

Figure~\ref{fig:synth_three_layer_grid} expands the same comparison to all six latent coordinates. The dominant first-layer coordinate still yields the largest absolute correlation, which is the key sign of first-layer recovery. At the same time, some non-first-layer coordinates remain substantially correlated, especially one coordinate in the third layer. We therefore interpret the three-layer result as a partial but meaningful success: the estimator continues to prioritize the first layer, but the separation from the remaining latent coordinates is no longer as clean as in the simpler two-layer case.

\begin{figure*}[t]
    \centering
    \includegraphics[width=\linewidth]{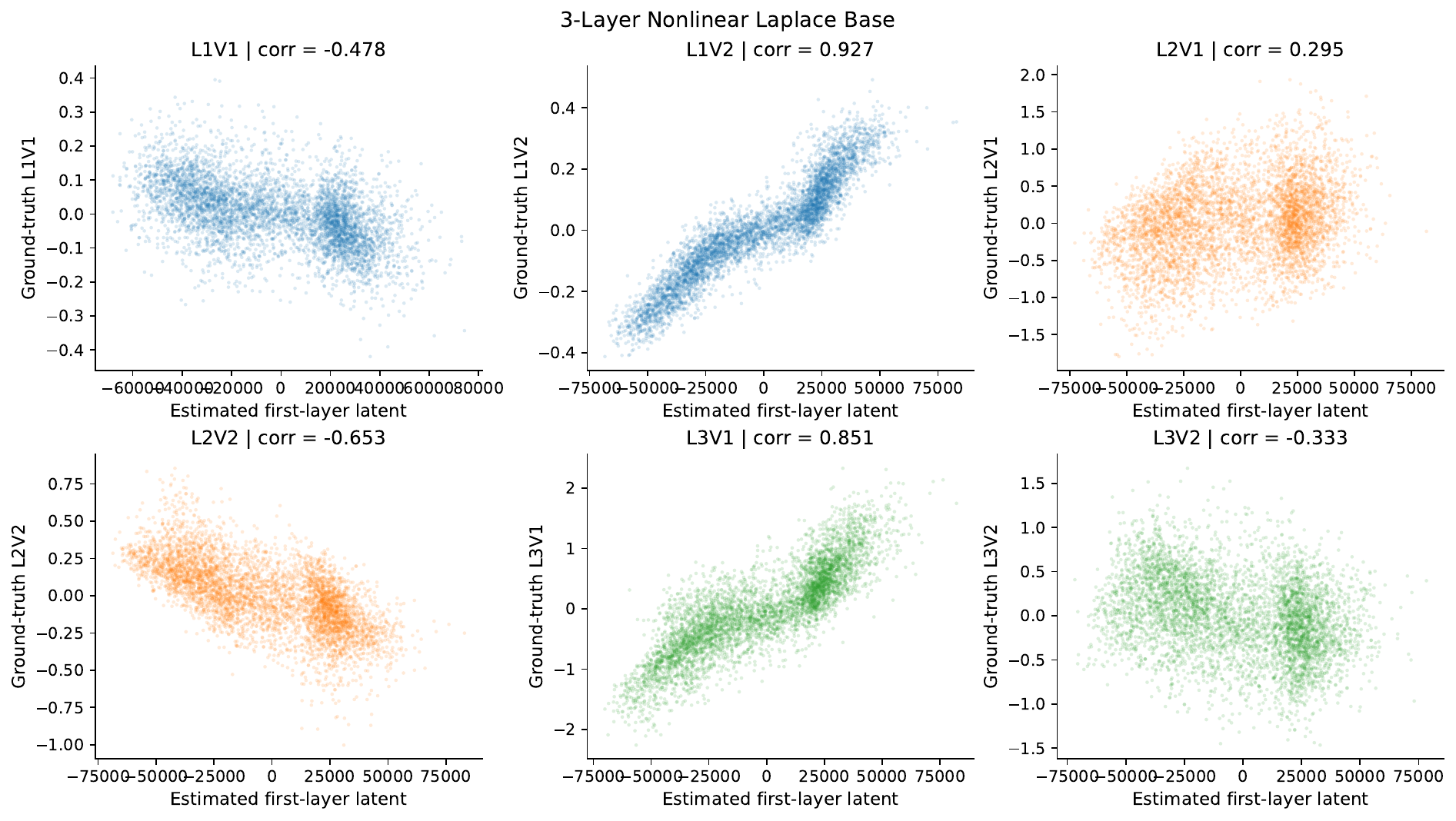}
    \caption{Three-layer synthetic experiment, full cross-layer comparison. Each panel compares the learned scalar first-layer representation with one ground-truth latent coordinate among the six coordinates spanning all three layers. The strongest alignment is attained on a first-layer coordinate, while several non-first-layer coordinates still exhibit sizable, though smaller, correlations.}
    \label{fig:synth_three_layer_grid}
\end{figure*}

Overall, these four figures support a cautious version of the intended conclusion. In the two-layer settings, the learned scalar representation aligns much more strongly with the first-layer latent than with the second-layer latent, under both linear Laplace and nonlinear Laplace data generation. In the harder three-layer setting, the largest absolute correlation is still attained by a first-layer coordinate, but one third-layer coordinate and one second-layer coordinate remain substantially correlated. Thus, the three-layer experiment provides evidence of preferential first-layer estimation in a deeper hierarchy, rather than full separation of all layers or full recovery of the two-dimensional first-layer subspace.

\subsection{Real-world Experiments on LLM Activations}
\label{app: real-world}

\subsubsection{Preprocessing of LLM Activations}
\paragraph{LLM Activations}

We extract hidden-state activations from two pretrained language models, namely, Pythia-1.4B and Gemma-2-9B, by registering forward hooks at a single intermediate layer (layer 16 and layer 25 respectively), chosen to capture rich semantic representations at approximately 60\% of network depth. For each dataset, we stream up to 50,000 passages (arXiv title+abstract concatenations, AMPS problem+solution pairs, Wikipedia articles), tokenize and truncate to 512 tokens, and run the frozen LLM to extract per-token hidden states, i.e., activations. The resulting token-level activation hidden dimension of Pythia-1.4B and Gemma-2-9B are 2048 and 3584 respectively. They are compressed to 256 dimensions via Incremental PCA separately. Consecutive token windows of length $p+1$ are constructed within passage boundaries to form training samples for MoRe, preserving local sequential structure without crossing document boundaries. We set the number of hierarchy layers to $L=4$, providing sufficient depth to capture fundamental-to-specific concept gradients while remaining interpretable. For the VAR model order $p$, we evaluate $p \in \{10, 50, 100\}$: order 10 captures short-range token-level dependencies where hierarchical transitions are strongest (B-matrix weights 5--7$\times$ larger than at $p=100$ on AMPS), order 50 captures mid-range discourse structure, and order 100 serves as an upper bound confirming that the signal degrades at longer lags — consistent with hierarchical concept transitions in structured text being primarily a local phenomenon. For the inter-layer sparsity penalty $\lambda_\text{sp} \in \{10^{-4}, 10^{-5}\}$, both values produce consistent Gini concentration gradients on AMPS (L0 $\mu=0.054$ vs. L2 $\mu \approx 0.039$ for both settings), confirming that hierarchy recovery is not sensitive to the precise sparsity level within this range. Unless otherwise stated, we report results at $p=10$ and $\lambda_\text{sp}=10^{-4}$, which yields the strongest B-matrix signal and clearest concentration gradient across all three datasets.

\subsubsection{S-RQ2\&3 Settings}

\subsection{Transition scenarios}
\label{app:transitions}

All experiments use a shared data1 dataset generated from two active latent factors $(z_1, z_2)$ observed through a random linear mixing matrix.
The seven data2 transition scenarios vary which factors are retained, dropped, or newly introduced:

\begin{table}[h]
\centering
\small
\begin{tabular}{llll}
\toprule
\textbf{ID} & \textbf{Name} & \textbf{Active factors in data2} & \textbf{New dims} \\
\midrule
C1 & No shift       & $z_1, z_2$             & 0 \\
C2 & Full expand    & $z_1, z_2, z_3, z_4$   & 2 \\
C3 & Drop all       & $z_3, z_4$             & 2 \\
C4$^\star$ & Selective & $z_1, z_3, z_4$     & 2 \\
C5$^\star$ & Selective & $z_2, z_3, z_4$     & 2 \\
C6 & Keep one       & $z_1$                  & 0 \\
C7 & Keep one       & $z_2$                  & 0 \\
\bottomrule
\end{tabular}
\caption{Seven transition scenarios. C4$^\star$ and C5$^\star$ (starred) are the hard selective cases requiring simultaneous reuse and expansion.}
\label{tab:transitions}
\end{table}

\paragraph{Evaluation metrics}
\label{app:metrics}

\textbf{Plasticity.} (target MCC): mean correlation coefficient between estimated and ground-truth latents on data2, maximized over permutations.
Higher is better; scratch and fine-tune serve as upper-bound references.

\textbf{Stability.} (forgetting): difference between old-factor MCC before and after adaptation, $\Delta = \mathrm{MCC}_{\mathrm{old}}^{\mathrm{before}} - \mathrm{MCC}_{\mathrm{old}}^{\mathrm{after}}$.
Negative values indicate backward transfer (adaptation improved old-factor recovery).

\textbf{Gate accuracy} (S-RQ3 only): fraction of binary gate decisions matching the ground-truth factor assignment, averaged over old and new modules.

\paragraph{Baselines}
\label{app:baselines}

\begin{itemize}[leftmargin=1.2em,itemsep=2pt]
  \item \textbf{Scratch}: train a new encoder on data2 from random initialization; no knowledge transfer.
  \item \textbf{Fine-tune}: initialize from the data1 encoder and fine-tune all parameters on data2; no explicit stability mechanism.
  \item \textbf{Expansion only}: freeze the data1 encoder entirely and learn new modules for data2 factors; perfect stability by construction but cannot adapt reused modules.
  \item \textbf{MoRe w/o $T$}: full MoRe gating and sparsity losses, but the alignment matrix $T$ is removed; tests the contribution of $T$ to stability.
  \item \textbf{MoRe w/o $w_{\mathrm{sp}}^{\mathrm{old}}$}: MoRe without the sparsity penalty on old gates; tests whether old-module regularization is necessary for correct gate decisions.
  \item \textbf{MoRe w/o $w_{\mathrm{sp}}^{\mathrm{new}}$}: MoRe without the sparsity penalty on new gates; tests whether new-module regularization is necessary.
  \item \textbf{MoRe}: the full proposed method with uniform gate initialization, alignment $T$, and both sparsity penalties.
\end{itemize}

\subsection{Hyperparameter selection}
\label{app:hparams}

Hyperparameters are selected per scenario via random search over 8 configurations ($T=50{,}000$ time steps, 3 seeds), evaluated by a composite score combining gate accuracy, target MCC, and forgetting.
The selected ranges are: $w_{\mathrm{sp}}^{\mathrm{old}} \in \{0.005, 0.01, 0.05\}$, $w_{\mathrm{sp}}^{\mathrm{new}} \in \{0.001, 0.01, 0.05\}$, $w_{\mathrm{align}} \in \{0.005, 0.02\}$, noise floor $\in \{0.3, 0.5\}$, $\tau_{\mathrm{end}} \in \{0.2, 0.5\}$.
All other hyperparameters are fixed: learning rate $5 \times 10^{-3}$, 2{,}000 training epochs, observation dimension 8, noise scale 0.1.

\subsubsection{Extracted Low-to-High Layer Relations}
\label{app: cases}
\begin{table}[h]
\centering
\caption{%
  Representative cross-layer B-matrix links from the best setting
  (order $p=10$, $\lambda_{\mathrm{sp}}=10^{-4}$, Pythia-1.4B).
  Each row shows a directed temporal dependency from a source-layer
  concept to a target-layer concept, annotated with the top-3
  activating categories and the mean B-matrix weight (strength).
  AMPS links follow the curriculum direction (fundamental
  $\to$ specific); arXiv links show abstract mathematics predicting
  applied domains.
}
\label{tab:blinks}
\setlength{\tabcolsep}{5pt}
\begin{tabular}{llp{3.8cm}p{3.8cm}r}
\toprule
\textbf{Dataset} &
\textbf{Block} &
\textbf{Source concept (top categories)} &
\textbf{Target concept (top categories)} &
\textbf{Strength} \\
\midrule
\multicolumn{5}{l}{\textit{AMPS — curriculum hierarchy (arithmetic $\to$ advanced linear algebra)}} \\
\addlinespace[2pt]
AMPS & L0\,$\to$\,L2
  & \texttt{L0\_add}, \texttt{L0\_simplify}, \texttt{L1\_factor}
  & \texttt{L3\_determinant}, \texttt{L2\_matrix}, \texttt{L1\_equation}
  & 0.078 \\
AMPS & L0\,$\to$\,L1
  & \texttt{L1\_equation}, \texttt{L1\_factor}, \texttt{L0\_simplify}
  & \texttt{L0\_multiply}, \texttt{L1\_equation}, \texttt{L2\_matrix}
  & 0.063 \\
AMPS & L1\,$\to$\,L3
  & \texttt{L1\_factor}, \texttt{L0\_simplify}, \texttt{L0\_add}
  & \texttt{L3\_determinant}, \texttt{L2\_matrix}, \texttt{L0\_multiply}
  & 0.045 \\
\addlinespace[4pt]
\midrule
\multicolumn{5}{l}{\textit{arXiv — abstract mathematics $\to$ applied domains}} \\
\addlinespace[2pt]
arXiv & L1\,$\to$\,L3
  & \texttt{math.NT}, \texttt{math.LO}, \texttt{math.GM}
  & \texttt{math.CT}, \texttt{math.LO}, \texttt{physics.ins-det}
  & 0.011 \\
arXiv & L1\,$\to$\,L2
  & \texttt{math.CT}, \texttt{cs.LO}, \texttt{cs.SE}
  & \texttt{math.GN}, \texttt{q-fin.CP}, \texttt{math.GM}
  & 0.010 \\
\bottomrule
\end{tabular}
\vspace{2pt}
\begin{minipage}{\linewidth}
\small
\textit{Note:} B-link strengths are 5--7$\times$ stronger on AMPS than
arXiv (0.045--0.078 vs.\ 0.010--0.011), consistent with AMPS having
cleaner sequential structure through its step-by-step curriculum format.
arXiv strengths are weaker because domain category labels provide only
a coarse proxy for abstraction level.
\end{minipage}
\end{table}

\section{Extended Related Work}
\label{app:extended-related-work}
We review the literature from a relatively high-level perspective to show that a principled way for continual representation learning is missing, and the key can be a modular representation structure.
\paragraph{Supervised Continual Learning}
Continual learning (CL) is commonly framed as the problem of acquiring new capabilities without catastrophic forgetting. In supervised CL, many representative methods address this problem by constraining parameter updates, replaying information from previous tasks, or isolating parameters across tasks. Regularization and distillation methods---such as Elastic Weight Consolidation \citep{kirkpatrick2017overcoming}, Learning without Forgetting \citep{li2016learning} and Memory Aware Synapses \citep{aljundi2018memory}---penalize changes that would harm previous tasks. Replay-based and exemplar-based methods, such as iCaRL \citep{rebuffi2017icarl}, maintain or approximate past-task information while jointly learning classifiers and representations. Architecture-based methods instead allocate separate or partially separate computation to different tasks: Progressive Networks add task-specific columns with lateral transfer, PathNet learns task-specific paths through a shared module pool, and Expert Gate learns a gating mechanism that routes inputs to task-specialized experts \citep{rusu2016progressive,fernando2017pathnet,aljundi2017expert}. A closely related line of work explicitly conditions computation on the task, such as Hard Attention to the Task \citep{serra2018overcoming} that learns nearly-binary attention masks that protect units used by previous tasks.

The Mixture-of-Experts (MoE) framework \citep{shazeer2017outrageously}, in which a trainable gating network selects a sparse combination of feed-forward experts per example, provides a natural substrate for continual learning since new experts can be added without disturbing old ones. Recent analyses further formalize how expert specialization and routing reduce interference in continual task streams \citep{li2025theory}. Lifelong-MoE \citep{chen2023lifelong} expands and freezes distribution-specialized experts and gating dimensions for continual language pre-training, while MoE-Adapters \citep{yu2024boosting} attach task-specific adapter experts to a frozen vision-language backbone with a distribution-discriminative auto-selector. With large pre-trained models, parameter-efficient fine-tuning (PEFT) methods adapt only small task-specific components, such as adapters or low-rank updates, while freezing most backbone parameters \citep{houlsby2019parameter,hu2022lora}. Building on this, recent CL methods constrain successive PEFT updates to interfere as little as possible with prior knowledge: O-LoRA \citep{wang2023orthogonal} learns sequential LoRA updates in mutually orthogonal low-rank subspaces, and InfLoRA \citep{liang2024inflora} reparameterizes pre-trained weights through an interference-eliminating subspace. Prompt-based continual learning extends the PEFT idea by learning small prompt memories or complementary prompts to manage task-specific and task-invariant knowledge without replay \citep{wang2022learning,wang2022dualprompt,smith2023coda}.
Despite their empirical strengths, these supervised CL methods predominantly treat forgetting as a parameter- or task-level phenomenon: experts, paths, prompts, and low-rank updates are defined and indexed by \emph{tasks} rather than by the latent structure of the data, and consequently offer no principled way to separate fundamental, broadly shared knowledge from task-specific specializations. We argue that the underlying issue is fundamentally representational: tasks require representations that are simultaneously distinct (so updates do not interfere) and structured (so that what is shared is reused). Once such rich representations are obtained, a few labeled samples should suffice to specialize to any new task. MoRe takes this view explicitly, deriving its modular decomposition from identifiable factors of the data-generating process rather than from task labels.

\paragraph{Unsupervised Continual Learning}
A second line of work studies continual learning when labels or task identities are unavailable. CURL \citep{rao2019continual} introduced an unsupervised continual representation-learning setting in which the model infers tasks, dynamically expands to capture new concepts, and uses rehearsal to reduce forgetting. In the SSL setting, Hu et al.~\citep{hu2022streaming} evaluate sequential self-supervised pre-training on streaming ImageNet and DomainNet, using downstream transfer tasks to measure whether learned representations both adapt and avoid forgetting; they find that simple replay or parameter regularization can make sequential SSL close to joint pre-training in many settings, while severe distribution shifts remain challenging. LUMP \citep{madaan2021representational} further observes that unsupervised representations forget less than supervised ones and proposes mixup interpolation between current and replayed instances. CaSSLe \citep{fini2022self} converts SSL objectives into temporal distillation by mapping current representations to past ones, improving continual self-supervised visual representation learning across several SSL losses. PFR \citep{gomez2022continually} and Kaizen \citep{tang2024kaizen} extend this predictor-based framework to mitigate plasticity loss and to jointly train feature extractors and classifiers. Most closely related to our motivation, Branch-Tuning \citep{liu2025branch} analyzes stability and plasticity using centered kernel alignment and proposes branch expansion/compression, fixing existing components while adding trainable convolutional branches for new data and then reparameterizing them back into the network. These methods clarify that continual SSL should be evaluated at the representation level, rather than only by immediate training losses. However, their adaptation mechanisms still operate by replay, distillation, regularization, or branch modification. MoRe instead asks what representation structure makes such selective adaptation principled: we posit and identify a hierarchical modular representation, enabling adaptation to be localized to the modules whose latent factors have changed.

\paragraph{Modular Network for Continual Learning}
Modularity is a long-standing strategy for managing the stability--plasticity trade-off \citep{wang2024comprehensive}. Recent work has unified adapters, MoE layers, sparse subnetworks, and prompt mixtures under a common modular framework that highlights CL as a canonical setting where modularity should help \citep{pfeiffer2023modular}. Progressive networks add a new column for each task and connect it laterally to previous columns \citep{rusu2016progressive}; PathNet evolves task-specific paths through a fixed pool of modules \citep{fernando2017pathnet}; Expert Gate learns a network of experts and uses gating autoencoders to select the relevant expert at test time \citep{aljundi2017expert}; hard-attention and pruning approaches learn task-specific masks or reuse freed parameters to protect old tasks \citep{serra2018overcoming,mallya2018packnet}; and MNTDP \citep{veniat2021efficientcontinuallearningmodular} maintains a library of modules and decides per task which to reuse and which to add via a data-driven prior. Modular adapter approaches extend the same idea to parameter-efficient adaptation of pretrained transformers \citep{ponti2023combining}, and sparse mixture-of-experts architectures similarly route inputs to subsets of capacity \citep{shazeer2017outrageously}. These approaches instantiate \emph{architectural} modularity: modules are usually experts, subnetworks, masks, prompts, or adapters that are introduced or selected according to tasks, heuristics, or learned gates. MoRe is complementary but different: it learns \emph{representational} modularity, where modules correspond to latent components organized by a hierarchy in the data-generating process, with identifiability guarantees obtained from time-delayed cross-layer dependencies. The decomposition is therefore not merely a convenient allocation of parameters but a principled reflection of structure in the data, providing a basis for preserving fundamental knowledge while updating more specific components without requiring a task oracle.

%% file: main.bib
@inproceedings{
    li2025idol,
    title={On the Identification of Temporal Causal Representation with Instantaneous Dependence},
    author={Zijian Li and Yifan Shen and Kaitao Zheng and Ruichu Cai and Xiangchen Song and Mingming Gong and Guangyi Chen and Kun Zhang},
    booktitle={The Thirteenth International Conference on Learning Representations},
    year={2025},
    url={https://openreview.net/forum?id=2efNHgYRvM}
}

@article{yao2022tdrl,
  title={Temporally disentangled representation learning},
  author={Yao, Weiran and Chen, Guangyi and Zhang, Kun},
  journal={Advances in Neural Information Processing Systems},
  volume={35},
  pages={26492--26503},
  year={2022}
}

@article{zhang2024generalsetting,
  title={Causal representation learning from multiple distributions: A general setting},
  author={Zhang, Kun and Xie, Shaoan and Ng, Ignavier and Zheng, Yujia},
  journal={arXiv preprint arXiv:2402.05052},
  year={2024}
}

@article{kirkpatrick2017overcoming,
  title={Overcoming catastrophic forgetting in neural networks},
  author={Kirkpatrick, James and Pascanu, Razvan and Rabinowitz, Neil and Veness, Joel and Desjardins, Guillaume and Rusu, Andrei A and Milan, Kieran and Quan, John and Ramalho, Tiago and Grabska-Barwinska, Agnieszka and others},
  journal={Proceedings of the national academy of sciences},
  volume={114},
  number={13},
  pages={3521--3526},
  year={2017},
  publisher={National Academy of Sciences}
}

@inproceedings{aljundi2018memory,
  title={Memory aware synapses: Learning what (not) to forget},
  author={Aljundi, Rahaf and Babiloni, Francesca and Elhoseiny, Mohamed and Rohrbach, Marcus and Tuytelaars, Tinne},
  booktitle={Proceedings of the European conference on computer vision (ECCV)},
  pages={139--154},
  year={2018}
}

@inproceedings{rebuffi2017icarl,
  title={icarl: Incremental classifier and representation learning},
  author={Rebuffi, Sylvestre-Alvise and Kolesnikov, Alexander and Sperl, Georg and Lampert, Christoph H},
  booktitle={Proceedings of the IEEE conference on Computer Vision and Pattern Recognition},
  pages={2001--2010},
  year={2017}
}

@article{rusu2016progressive,
  title   = {Progressive Neural Networks},
  author  = {Rusu, Andrei A. and Rabinowitz, Neil C. and Desjardins, Guillaume and Soyer, Hubert and Kirkpatrick, James and Kavukcuoglu, Koray and Pascanu, Razvan and Hadsell, Raia},
  journal = {arXiv preprint arXiv:1606.04671},
  year    = {2016}
}

@inproceedings{aljundi2017expert,
  title={Expert gate: Lifelong learning with a network of experts},
  author={Aljundi, Rahaf and Chakravarty, Punarjay and Tuytelaars, Tinne},
  booktitle={Proceedings of the IEEE conference on computer vision and pattern recognition},
  pages={3366--3375},
  year={2017}
}

@inproceedings{serra2018overcoming,
  title={Overcoming catastrophic forgetting with hard attention to the task},
  author={Serra, Joan and Suris, Didac and Miron, Marius and Karatzoglou, Alexandros},
  booktitle={International conference on machine learning},
  pages={4548--4557},
  year={2018},
  organization={PMLR}
}

@inproceedings{mallya2018packnet,
  title={Packnet: Adding multiple tasks to a single network by iterative pruning},
  author={Mallya, Arun and Lazebnik, Svetlana},
  booktitle={Proceedings of the IEEE conference on Computer Vision and Pattern Recognition},
  pages={7765--7773},
  year={2018}
}

@article{shazeer2017outrageously,
  title={Outrageously large neural networks: The sparsely-gated mixture-of-experts layer},
  author={Shazeer, Noam and Mirhoseini, Azalia and Maziarz, Krzysztof and Davis, Andy and Le, Quoc and Hinton, Geoffrey and Dean, Jeff},
  journal={arXiv preprint arXiv:1701.06538},
  year={2017}
}

@inproceedings{chen2023lifelong,
  title={Lifelong language pretraining with distribution-specialized experts},
  author={Chen, Wuyang and Zhou, Yanqi and Du, Nan and Huang, Yanping and Laudon, James and Chen, Zhifeng and Cui, Claire},
  booktitle={International Conference on Machine Learning},
  pages={5383--5395},
  year={2023},
  organization={PMLR}
}

@inproceedings{yu2024boosting,
  title={Boosting continual learning of vision-language models via mixture-of-experts adapters},
  author={Yu, Jiazuo and Zhuge, Yunzhi and Zhang, Lu and Hu, Ping and Wang, Dong and Lu, Huchuan and He, You},
  booktitle={Proceedings of the IEEE/CVF Conference on Computer Vision and Pattern Recognition},
  pages={23219--23230},
  year={2024}
}

@inproceedings{houlsby2019parameter,
  title={Parameter-efficient transfer learning for NLP},
  author={Houlsby, Neil and Giurgiu, Andrei and Jastrzebski, Stanislaw and Morrone, Bruna and De Laroussilhe, Quentin and Gesmundo, Andrea and Attariyan, Mona and Gelly, Sylvain},
  booktitle={International conference on machine learning},
  pages={2790--2799},
  year={2019},
  organization={PMLR}
}

@article{hu2022lora,
  title={Lora: Low-rank adaptation of large language models.},
  author={Hu, Edward J and Shen, Yelong and Wallis, Phillip and Allen-Zhu, Zeyuan and Li, Yuanzhi and Wang, Shean and Wang, Liang and Chen, Weizhu and others},
  journal={Iclr},
  volume={1},
  number={2},
  pages={3},
  year={2022}
}

@inproceedings{wang2023orthogonal,
  title={Orthogonal subspace learning for language model continual learning},
  author={Wang, Xiao and Chen, Tianze and Ge, Qiming and Xia, Han and Bao, Rong and Zheng, Rui and Zhang, Qi and Gui, Tao and Huang, Xuan-Jing},
  booktitle={Findings of the Association for Computational Linguistics: EMNLP 2023},
  pages={10658--10671},
  year={2023}
}

@inproceedings{liang2024inflora,
  title={Inflora: Interference-free low-rank adaptation for continual learning},
  author={Liang, Yan-Shuo and Li, Wu-Jun},
  booktitle={Proceedings of the IEEE/CVF Conference on Computer Vision and Pattern Recognition},
  pages={23638--23647},
  year={2024}
}

@inproceedings{wang2022learning,
  title={Learning to prompt for continual learning},
  author={Wang, Zifeng and Zhang, Zizhao and Lee, Chen-Yu and Zhang, Han and Sun, Ruoxi and Ren, Xiaoqi and Su, Guolong and Perot, Vincent and Dy, Jennifer and Pfister, Tomas},
  booktitle={Proceedings of the IEEE/CVF conference on computer vision and pattern recognition},
  pages={139--149},
  year={2022}
}

@inproceedings{wang2022dualprompt,
  title={Dualprompt: Complementary prompting for rehearsal-free continual learning},
  author={Wang, Zifeng and Zhang, Zizhao and Ebrahimi, Sayna and Sun, Ruoxi and Zhang, Han and Lee, Chen-Yu and Ren, Xiaoqi and Su, Guolong and Perot, Vincent and Dy, Jennifer and others},
  booktitle={European conference on computer vision},
  pages={631--648},
  year={2022},
  organization={Springer}
}

@inproceedings{smith2023coda,
  title={Coda-prompt: Continual decomposed attention-based prompting for rehearsal-free continual learning},
  author={Smith, James Seale and Karlinsky, Leonid and Gutta, Vyshnavi and Cascante-Bonilla, Paola and Kim, Donghyun and Arbelle, Assaf and Panda, Rameswar and Feris, Rogerio and Kira, Zsolt},
  booktitle={Proceedings of the IEEE/CVF conference on computer vision and pattern recognition},
  pages={11909--11919},
  year={2023}
}

@article{wang2024comprehensive,
  title={A comprehensive survey of continual learning: Theory, method and application},
  author={Wang, Liyuan and Zhang, Xingxing and Su, Hang and Zhu, Jun},
  journal={IEEE transactions on pattern analysis and machine intelligence},
  volume={46},
  number={8},
  pages={5362--5383},
  year={2024},
  publisher={IEEE}
}

@article{rao2019continual,
  title={Continual unsupervised representation learning},
  author={Rao, Dushyant and Visin, Francesco and Rusu, Andrei and Pascanu, Razvan and Teh, Yee Whye and Hadsell, Raia},
  journal={Advances in neural information processing systems},
  volume={32},
  year={2019}
}

@article{madaan2021representational,
  title={Representational continuity for unsupervised continual learning},
  author={Madaan, Divyam and Yoon, Jaehong and Li, Yuanchun and Liu, Yunxin and Hwang, Sung Ju},
  journal={arXiv preprint arXiv:2110.06976},
  year={2021}
}

@inproceedings{gomez2022continually,
  title={Continually learning self-supervised representations with projected functional regularization},
  author={Gomez-Villa, Alex and Twardowski, Bartlomiej and Yu, Lu and Bagdanov, Andrew D and Van de Weijer, Joost},
  booktitle={Proceedings of the IEEE/CVF conference on computer vision and pattern recognition},
  pages={3867--3877},
  year={2022}
}

@inproceedings{tang2024kaizen,
  title={Kaizen: Practical self-supervised continual learning with continual fine-tuning},
  author={Tang, Chi Ian and Qendro, Lorena and Spathis, Dimitris and Kawsar, Fahim and Mascolo, Cecilia and Mathur, Akhil},
  booktitle={Proceedings of the IEEE/CVF winter conference on applications of computer vision},
  pages={2841--2850},
  year={2024}
}

@inproceedings{fini2022self,
  title={Self-supervised models are continual learners},
  author={Fini, Enrico and Da Costa, Victor G Turrisi and Alameda-Pineda, Xavier and Ricci, Elisa and Alahari, Karteek and Mairal, Julien},
  booktitle={Proceedings of the IEEE/CVF conference on computer vision and pattern recognition},
  pages={9621--9630},
  year={2022}
}

@article{liu2025branch,
  title={Branch-tuning: balancing stability and plasticity for continual self-supervised learning},
  author={Liu, Wenzhuo and Zhu, Fei and Liu, Cheng-Lin},
  journal={IEEE Transactions on Neural Networks and Learning Systems},
  year={2025},
  publisher={IEEE}
}

@article{hu2022streaming,
  title={How well does self-supervised pre-training perform with streaming data?},
  author={Hu, Dapeng and Yan, Shipeng and Lu, Qizhengqiu and Hong, Lanqing and Hu, Hailin and Zhang, Yifan and Li, Zhenguo and Wang, Xinchao and Feng, Jiashi},
  journal={arXiv preprint arXiv:2104.12081},
  year={2021}
}

@InProceedings{li2016learning,
author="Li, Zhizhong
and Hoiem, Derek",
editor="Leibe, Bastian
and Matas, Jiri
and Sebe, Nicu
and Welling, Max",
title="Learning Without Forgetting",
booktitle="Computer Vision -- ECCV 2016",
year="2016",
publisher="Springer International Publishing",
address="Cham",
pages="614--629",
}

@article{pfeiffer2023modular,
  title={Modular deep learning},
  author={Pfeiffer, Jonas and Ruder, Sebastian and Vuli{\'c}, Ivan and Ponti, Edoardo Maria},
  journal={arXiv preprint arXiv:2302.11529},
  year={2023}
}

@misc{veniat2021efficientcontinuallearningmodular,
      title={Efficient Continual Learning with Modular Networks and Task-Driven Priors}, 
      author={Tom Veniat and Ludovic Denoyer and Marc'Aurelio Ranzato},
      year={2021},
      eprint={2012.12631},
      archivePrefix={arXiv},
      primaryClass={cs.LG},
}

@inproceedings{ponti2023combining,
  title={Combining parameter-efficient modules for task-level generalisation},
  author={Ponti, Edoardo Maria and Sordoni, Alessandro and Bengio, Yoshua and Reddy, Siva},
  booktitle={Proceedings of the 17th Conference of the European Chapter of the Association for Computational Linguistics},
  pages={687--702},
  year={2023}
}

@article{fernando2017pathnet,
  title={Pathnet: Evolution channels gradient descent in super neural networks},
  author={Fernando, Chrisantha and Banarse, Dylan and Blundell, Charles and Zwols, Yori and Ha, David and Rusu, Andrei A and Pritzel, Alexander and Wierstra, Daan},
  journal={arXiv preprint arXiv:1701.08734},
  year={2017}
}

@article{li2025theory,
  title={Theory on mixture-of-experts in continual learning},
  author={Li, Hongbo and Lin, Sen and Duan, Lingjie and Liang, Yingbin and Shroff, Ness B},
  journal={arXiv preprint arXiv:2406.16437},
  year={2024}
}

@inproceedings{biderman2023pythia,
  title={Pythia: A suite for analyzing large language models across training and scaling},
  author={Biderman, Stella and Schoelkopf, Hailey and Anthony, Quentin Gregory and Bradley, Herbie and O’Brien, Kyle and Hallahan, Eric and Khan, Mohammad Aflah and Purohit, Shivanshu and Prashanth, USVSN Sai and Raff, Edward and others},
  booktitle={International conference on machine learning},
  pages={2397--2430},
  year={2023},
  organization={PMLR}
}

@article{team2024gemma,
  title={Gemma: Open models based on gemini research and technology},
  author={Team, Gemma and Mesnard, Thomas and Hardin, Cassidy and Dadashi, Robert and Bhupatiraju, Surya and Pathak, Shreya and Sifre, Laurent and Rivi{\`e}re, Morgane and Kale, Mihir Sanjay and Love, Juliette and others},
  journal={arXiv preprint arXiv:2403.08295},
  year={2024}
}

@inproceedings{
song2025llm,
title={{LLM} Interpretability with Identifiable Temporal-Instantaneous Representation},
author={Xiangchen Song and Jiaqi Sun and Zijian Li and Yujia Zheng and Kun Zhang},
booktitle={The Thirty-ninth Annual Conference on Neural Information Processing Systems},
year={2025},
url={https://openreview.net/forum?id=TdmzrkdLG0}
}

@article{louizos2017learning,
  title={Learning sparse neural networks through $ L\_0 $ regularization},
  author={Louizos, Christos and Welling, Max and Kingma, Diederik P},
  journal={arXiv preprint arXiv:1712.01312},
  year={2017}
}

@article{tafazoli2026building,
  title={Building compositional tasks with shared neural subspaces},
  author={Tafazoli, Sina and Bouchacourt, Flora M and Ardalan, Adel and Markov, Nikola T and Uchimura, Motoaki and Mattar, Marcelo G and Daw, Nathaniel D and Buschman, Timothy J},
  journal={Nature},
  volume={650},
  number={8100},
  pages={164--172},
  year={2026},
  publisher={Nature Publishing Group}
}

@article{tian2026domain,
  title={Domain-specific schema reuse supports flexible learning to learn in the primate brain},
  author={Tian, Kaixi and Zhao, Zhiping and Chen, Yang and Ge, Ningling and Cao, Shenghao and Han, Xinyong and Gu, Jianwen and Yu, Shan},
  journal={Nature Communications},
  year={2026},
  publisher={Nature Publishing Group}
}
